\journal{Information Fusion}
\newcommand{\ours}{\textsc{DAMPE}\xspace}
\def\eqref#1{equation~\ref{#1}}
\def\1{\bm{1}}
\DeclareMathAlphabet{\mathsfit}{\encodingdefault}{\sfdefault}{m}{sl}
\SetMathAlphabet{\mathsfit}{bold}{\encodingdefault}{\sfdefault}{bx}{n}
\newtheorem{proposition}{Proposition}
\begin{document}

\begin{frontmatter}

\title{A Novel Framework for Multi-Modal Protein Representation Learning}

\author[1]{Runjie Zheng} 
\ead{zhengrj26@mail2.sysu.edu.cn}

\author[1]{Zhen Wang\corref{cor1}}
\ead{wangzh665@mail.sysu.edu.cn}

\author[1]{Anjie Qiao}
\ead{qiaoanj@mail2.sysu.edu.cn}

\author[1]{Jiancong Xie}
\ead{xiejc3@mail2.sysu.edu.cn}

\author[1]{Jiahua Rao}
\ead{raojh7@mail.sysu.edu.cn}

\author[1]{Yuedong Yang}
\ead{yangyd25@mail.sysu.edu.cn}
\cortext[cor1]{Corresponding author: Zhen Wang. E-mail: wangzh665@mail.sysu.edu.cn}

\affiliation[1]{
organization={School of Computer Science and Engineering, Sun Yat-sen University (SYSU)},
addressline={No. 132, Outer Ring East Road, University Town, Panyu District},
city={Guangzhou},
postcode={510006},
state={Guangdong},
country={China}}

\begin{abstract}
Accurate protein function prediction requires integrating heterogeneous intrinsic signals (e.g., sequence and structure) with noisy extrinsic contexts (e.g., protein–protein interactions and GO term annotations).
However, two key challenges hinder effective fusion: (i) cross-modal distributional mismatch among embeddings produced by pre-trained intrinsic encoders, and (ii) noisy relational graphs of extrinsic data that degrade GNN-based information aggregation.
We propose \textbf{D}iffused and \textbf{A}ligned \textbf{M}ulti-modal \textbf{P}rotein \textbf{E}mbedding (\textbf{DAMPE}), a unified framework that addresses these through two core mechanisms.
First, we propose Optimal Transport (OT)-based representation alignment that establishes correspondence between intrinsic embedding spaces of different modalities, effectively mitigating cross-modal heterogeneity.
Second, we develop a Conditional Graph Generation (CGG)-based information fusion method, where a condition encoder fuses the aligned intrinsic embeddings to provide informative cues for graph reconstruction.
Meanwhile, our theoretical analysis implies that the CGG objective drives this condition encoder to absorb graph-aware knowledge into its produced protein representations.
Empirically, \ours outperforms or matches state-of-the-art methods such as DPFunc on standard GO benchmarks, achieving AUPR gains of 0.002–0.013 pp and Fmax gains 0.004–0.007 pp.
Ablation studies further show that OT-based alignment contributes 0.043–0.064 pp AUPR, while CGG-based fusion adds 0.005–0.111 pp Fmax.
Overall, \ours offers a scalable and theoretically grounded approach for robust multi-modal protein representation learning, substantially enhancing protein function prediction.
\end{abstract}

\begin{keyword}
Protein Function Prediction \sep Multi-modal Representation Learning
\sep Optimal Transport \sep  Conditional Graph Generation 

\end{keyword}

\end{frontmatter}

\section{Introduction}
\label{sec:intro}
Proteins are essential biomolecules whose diverse structures and functions underpin key biological processes, including enzymatic activity, signal transduction, transport, and structural support~\cite{Hirokawa2009,Ferrell2000,kim_intermediate_2007}. 
Predicting protein function at scale is crucial for understanding biology and driving applications in drug design and disease research~\cite{Jumper_Alphafold2021,m_how_2023}, where accurate prediction strongly depends on high-quality protein representations that can capture function-relevant biological features.

Deep learning has advanced protein representation learning for function prediction. Pre-trained protein language models (PLMs)~\cite{Brandes2021,ESM_Rives2021,elnaggar2021prottrans} generate scalable sequence embeddings that outperform homology-based tools such as BLASTKNN~\cite{blastknn_cafa} and DIAMOND~\cite{diamond}. Yet PLMs lack structural and interaction context. Structure-based models like GVP~\cite{gvp_Jing2021} and GearNet~\cite{zhang2022gearnet}, empowered by AlphaFold2~\cite{Jumper_Alphafold2021}, capture geometric patterns but overlook evolutionary relations. These complementary strengths motivate integrating sequence and structure embeddings to enhance protein function prediction.

However, effectively merging these two modalities remains challenging. Naive concatenation suffers from cross-modal mismatch due to distinct feature geometries and distributions, whereas full joint re-training (e.g., cascading the sequence encoder into the structure encoder) greatly increases computational cost and parameters~\cite{Wang2022LMGVP,zhang2023systematic}. Moreover, theoretical studies on multi-modal learning~\cite{Modalbias} show that data imbalance, such as abundant, robust sequence features versus sparse structural ones, induces modal dominance, causing the model to over-rely on sequences and weaken structural representation learning.

To mitigate these issues, recent studies have begun exploring alignment of sequence- and structure-derived embeddings through lightweight fine-tuning of pre-trained encoders~\cite{SPLM}, often drawing inspiration from cross-modal contrastive learning in vision–language models~\cite{CLIP}.

Although conceptually appealing, cross-modal contrastive learning faces two major issues in the protein domain. (1) Restricted positives: Experimental or predicted structures represent only one static conformation, so each sequence–structure pair is treated as a unique ``true positive'', ignoring proteins’ natural conformational plasticity~\cite{10.7554/eLife.75751}. (2) False negatives: High sequence homology and fold convergence cause functionally similar proteins to be mislabeled as negatives, biasing the objective and limiting generalization~\cite{10.1093/nargab/lqac043}.

Furthermore, sequence and structure are regarded as intrinsic information of proteins because they directly encode the molecule itself. 
Beyond these intrinsic descriptors, extrinsic biological contexts, such as protein–protein interaction (PPI) networks~\cite{Szklarczyk2023_STRING} and Gene Ontology (GO) annotations~\cite{GeneOntologyConsortium2023}, offer complementary information that reflects the functional interplay and contextual roles of proteins.
Thus, many function-prediction methods~\cite{hou2021tale,DeepGO_2017,DEEPGRAPHGO,DPFunc_NatCommun2024} incorporate such extrinsic information.

A common strategy is to apply graph neural networks (GNNs) over PPI networks~\cite{hou2021tale,DeepGO_2017,DEEPGRAPHGO}, using intrinsic features as initial node features to integrate both intrinsic and extrinsic information in the eventual node embeddings.
Although effective, these approaches face two key limitations.
(1) They rely on message passing, which is inherently vulnerable to noisy or incomplete PPI edges~\cite{PPINosiy}; such noise can propagate and amplify through the network, reducing robustness, particularly in sparse or weakly annotated graphs~\cite{pmlr-v202-dong23a}.
(2) Their adopted link prediction objective assumes independence assumes conditional independence among edges, which is often violated in practical biological graphs due to missing links and the limited receptive field of GNNs.

To address existing limitations that stem from inter-modal heterogeneity and inherent flaws of extrinsic biological contexts, we propose \textbf{D}iffused and \textbf{A}ligned \textbf{M}ulti-modal \textbf{P}rotein \textbf{E}mbedding (\textbf{DAMPE}), a unified framework that integrates intrinsic and extrinsic information through two key mechanisms.

First, to alleviate cross-modal heterogeneity, we propose \textbf{Optimal Transport (OT)-based representation alignment}, which projects structural embeddings into the sequence embedding space, while enabling the reuse of frozen pre-trained encoders, thus substantially cutting retraining costs. By contrast, while contrastive learning is widely used for multi-modal alignment, it has well-documented drawbacks in biological applications (as detailed earlier); our OT-based approach circumvents these issues entirely, enabling more robust representation alignment.

Second, \textbf{Conditional Graph Generation (CGG)-based information fusion}: Rather than directly performing message passing on noisy a PPI
network, we train a conditional diffusion model to estimate the distribution of a heterogeneous graph's edge types conditioned on the proteins' intrinsic descriptors.
Specifically, we train a denoising network to reconstruct the clean heterogeneous graph from a noisy one, with a condition encoder that fuses the aligned embeddings of protein nodes into integrated condition embedding.
This embedding is then fed into the denoising network to guide the reconstruction process.
By optimizing the generative objective, this condition encoder is updated by the gradients propagating through the denoising network, which drive the condition encoder to internalize relational structure while remaining robust to graph noise~\cite{Graffe}.

Importantly, this condition encoder, learned through CGG training, then offers protein representation for function prediction. Since we parameterize it by a light-weight Mixture-of-Experts architecture, and graph-aware knowledge has been injected via CGG, we eliminate the need for traditional GNNs' iterative message passing.
By discarding such computational cost, it avoids inference latency, a benefit further validated by our experiments.

Our main contributions are summarized as follows:
\begin{itemize}
  \item We present \ours, a multi-modal protein representation learning framework that integrates intrinsic (sequence and structure) and extrinsic (PPI and GO) features for enhanced function prediction.
  \item To our knowledge, \ours is the first to leverage OT and CGG for protein function prediction, and our theoretical analysis explains what CGG drives the condition encoder to learn.
  \item Extensive experiments show that \ours outperforms baselines on most metrics in downstream tasks and achieves significant efficiency improvements. Furthermore, comprehensive ablation studies confirm the indispensable contribution of each mechanism.
\end{itemize}
\section{Related work}
\label{sec:related}
In this section, we present a brief summary of protein representation learning methods and their application in protein function prediction.

\noindent \textbf{Sequence encoders}
Sequence encoders have evolved significantly to capture discriminative features from amino acid sequences. Early approaches leveraged Convolutional Neural Network (CNN) to extract local motifs~\cite{CNN_Shanehsazzadeh}, while Long Short-Term Memory (LSTM) networks addressed long-range dependencies, with hybrid CNN-LSTM models further improving performance~\cite{LSTM_wang2016protein}. However, the most impactful breakthroughs came with Transformer-based Protein Language Models (PLMs), such as ESM~\cite{ESM_Rives2021} and ProtTrans~\cite{elnaggar2021prottrans}, which learn contextual embeddings from massive sequence datasets and dominate most downstream tasks. 

\noindent \textbf{Structure encoders}
Structure encoders capture the spatial and geometric properties of protein conformations and are commonly classified by their basic geometric primitive: residue-level models such as GearNet~\cite{zhang2022gearnet} and DeepFRI~\cite{deepfri_gligorijevic2021structure}; atom-level methods that operate on atomic point clouds or via 3D convolutions and thus offer higher chemical granularity at greater computational cost (e.g., IEConv~\cite{hermosilla2021ieconv}); and surface-based encoders that represent molecular surfaces as meshes or surface patches to emphasize interface geometry and chemistry (e.g., dMaSIF~\cite{dmasif_sverrisson2021fast}).
Residue-level models typically treat residues as graph nodes with edges defined by spatial proximity or biochemical contacts, whereas atom-level and surface-based approaches encode finer geometric/chemical detail at the cost of increased computation and data requirements.
Recent self-supervised pretraining (contrastive and diffusion-based) has further improved the transferability of these geometric representations~\cite{hermosilla2022contrastive,xu2023pretraining}. 

\noindent \textbf{Multi-modal Representation Learners} Multi-modal approaches integrate sequence and structure information to overcome the limitations of single-modality methods. LM-GVP~\cite{Wang2022LMGVP} combines PLMs with geometric vector perceptrons to infuse structural context into sequence embeddings, improving performance on stability prediction tasks. ESM-GearNet~\cite{zhang2023systematic} systematically compared three fusion paradigms between ESM-2 and structural encoders~\cite{gvp_Jing2021,zhang2022gearnet,cdconv_Fan2023}: \textit{Serial} fusion, where sequence representations are injected as residue features into the structure encoder; \textit{Parallel} fusion, where sequence and structure embeddings are concatenated; and \textit{Cross} fusion, which integrates the two modalities via multi-head self-attention. 
The study finds that serial fusion—i.e., augmenting geometric models with PLM-derived residue features—yields the best performance on various downstream tasks.
Similarly, SST-ResNet~\cite{zhang2023sst} employs multi-scale fusion to integrate sequence and structure, outperforming single-modality models in property prediction.
Notably, while serial fusion demonstrates superior performance, its practical scalability is constrained by heavy computational overhead—integrating large-scale PLMs with structural encoders often results in oversized model architectures, making full joint training computationally prohibitive, especially for large datasets or resource-limited settings.

\noindent \textbf{Protein function prediction}
Protein function prediction has benefited from these representation advances.
Transformer-based methods like TALE~\cite{hou2021tale} jointly embed sequences and hierarchical function labels, while ATGO~\cite{zhang2022atgo} combines PLMs with triplet networks for accurate Gene Ontology (GO) prediction. DPFunc~\cite{DPFunc_NatCommun2024} integrates sequence, structure, and domain information via cross-attention to enhance interpretability and accuracy.
These methods highlight the growing importance of rich, multi-source representations in advancing functional annotation.
In contrast, our work focuses on integrating protein-protein interaction (PPI) networks and GO annotations through conditional graph generation-based information fusion, offering a novel approach to leverage both interaction and functional knowledge for more comprehensive and accurate protein function prediction.
\section{Problem Formulation}
\label{sec:problem}
Gene Ontology (GO) is the standard resource for characterizing protein function, and is often organized in three aspects: Molecular Function (MF), Biological Process (BP), and Cellular Component (CC). In this work, protein function prediction is framed as a multi-class multi-label classification task targeting GO terms.

To address this task, we design a multi-modal representation learning framework that fuses two types of complementary information into a unified protein representation: (1) intrinsic information, namely protein-specific sequence and structural features; and (2) extrinsic information, namely a heterogeneous graph composed of a Protein-Protein Interaction (PPI) network and a Gene Ontology (GO).
These two kinds of information, along with their integration details, are elaborated as follows.

\subsection{Intrinsic Information}
The protein dataset \(\mathcal{P} = \{ p_1, \dots, p_{N_p} \}\) comprises \(N_p\) proteins, each characterized by its amino acid sequence and 3D backbone structure represented as a residue-level geometric graph.
Two complementary embeddings encode these data, respectively:
\begin{itemize}
    \item \textit{Sequence embeddings} \(\mathbf{E}^{\text{seq}} \in \mathbb{R}^{N_p \times d_{\text{seq}}}\) are produced from pre-trained protein language model (PLM).
    \item \textit{Structural embeddings} \(\mathbf{E}^{\text{struc}} \in \mathbb{R}^{N_p \times d_{\text{struc}}}\) are
 representations extracted from 3D backbone geometric graph by structure encoders.
\end{itemize}
As a general framework, \ours allows flexible selection of these encoders.
In this work, we specifically adopt ESM-1b~\cite{ESM_Rives2021} and GearNet~\cite{zhang2022gearnet} as the PLM for sequence embeddings and structure encoder for structural embeddings, respectively.

\subsection{Extrinsic Information}  
We consider two types of entities for modeling extrinsic biological knowledge: proteins \(\mathcal{P}\) and GO terms \(\mathcal{O}^{(o)} = \{\tau_1, \dots, \tau_{N_o}\}\) of each specific ontology $o\in\{\mathrm{MF,BP,CC}\}$.
To capture their topological relations, we define four relation types: \(\mathcal{R}=\{r_{\mathrm{ppi}},\,r_{\mathrm{go}},\,r_{\mathrm{anno}},\,r_{\varnothing}\},\) where:
\begin{enumerate}
  \item \textbf{PPI edges} ($r_{\mathrm{ppi}}$). We obtain PPI from STRING v12.0~\cite{Szklarczyk2023_STRING} and retain only high-confidence associations with combined score $\ge 700$. From the full STRING dump we select only edges whose both endpoints belong to $\mathcal{P}$; duplicate edges and self-loops are removed.
  \item \textbf{GO hierarchical edges} ($r_{\mathrm{go}}$). The GO DAG~\cite{GeneOntologyConsortium2023} is parsed from the same data release used by DPFunc. We keep primary hierarchical relations (e.g., \texttt{is\_a}, \texttt{part\_of}) and remove obsolete/alt terms; parent–child links between terms in $\mathcal{O}^{(o)}$ are added as $r_{\mathrm{go}}$ edges.
  \item \textbf{Protein–GO annotation edges} ($r_{\mathrm{anno}}$). For each protein $p\in\mathcal{P}$ we connect $p$ to every GO term $\tau\in\mathcal{O}^{(o)}$ that is among $p$'s annotations in the DPFunc label set. \textbf{To avoid label leakage}, annotation edges incident to proteins in the official test split are \textbf{not} included in $\mathcal{G}^{(o)}$.
  \item \textbf{No-edge relations} ($r_{\varnothing}$). For all pairs of nodes $(u,v)$ in $\mathcal{V}^{(o)} \times \mathcal{V}^{(o)}$ where no explicit edge exists, we explicitly assign the $r_{\varnothing}$ relation type. This universal no-edge type simplifies modeling while preserving the distinction between existing and absent connections in the graph.
\end{enumerate}

Putting these together, we consider each ontology-specific heterogeneous graph $\mathcal{G}^{(o)}=(\mathcal{V}^{(o)},\mathcal{E}^{(o)},\nu,\rho)$.
$\mathcal{V}^{(o)}=\mathcal{P}\cup\mathcal{O}^{(o)}$, where $\mathcal{O}^{(o)}\subset\mathcal{O}$ contains GO terms belonging to ontology $o$ that are candidate labels in the DPFunc~\cite{DPFunc_NatCommun2024} dataset. Edges in $\mathcal{E}^{(o)}$ comprise those four types.
The mapping $\nu:\mathcal{V} \to \{\text{p},\text{go}\}$ indicates the node type (protein or GO term), and \(\rho:\mathcal{E}\to\mathcal{R}\) assigns each edge its relation type. An edge is therefore represented by a triplet \((u,v,r)\in\mathcal{E}\) with \(\rho(u,v)=r\).
The relation-aware topology of a \(\mathcal{G}^{(o)}\) is encoded as a binary adjacency tensor $\mathbf{A}\in\{0,1\}^{|\mathcal{V}|\times|\mathcal{V}|\times|\mathcal{R}|},$
with entries
\[
\mathbf{A}_{i,j,r}=
\begin{cases}
1, & \text{if }(i,j)\in\mathcal{E}^{(o)}\text{ and }\rho(i,j)=r\\
0, & \text{otherwise}
\end{cases},
\]
explicitly recording which relation \(r\in\mathcal{R}\) holds for each node pair \((i,j)\).

To furnish node features for each \(\mathcal{G}^{(o)}\), GO-term priors are provided by Poincaré (hyperbolic) embeddings~\cite{poincare} $\mathbf{Z}\in\Re^{N_o\times d_{\mathrm{GO}}}$,
which are trained independently for each ontology and supply hierarchical priors that guide the learning of relation-aware protein representations.

\begin{figure}[t]
\centering 
\includegraphics[width=0.7\linewidth]{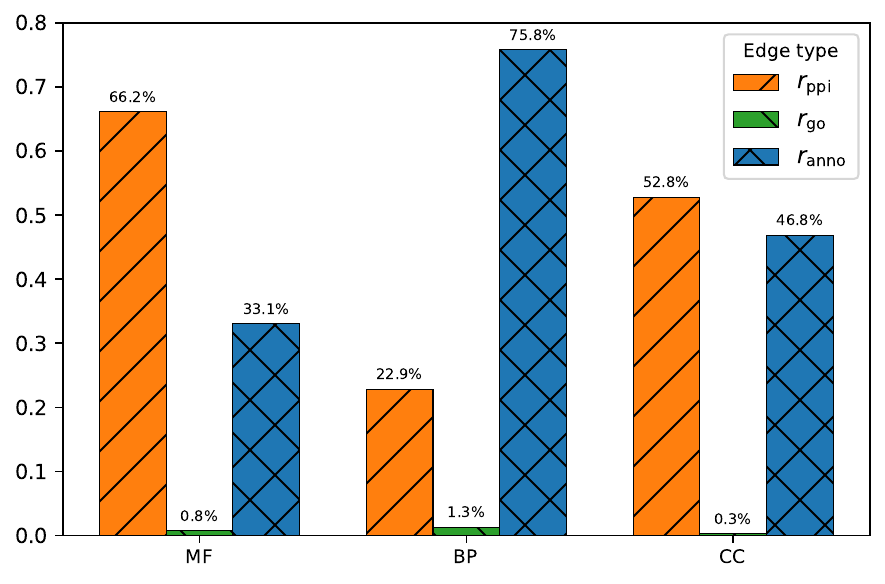}
\caption{Fraction of Edge Types Across Different Ontology. The proportions of edge types are normalized, with \(r_{\varnothing}\) accounting for 99.91\% in MF, 99.81\% in BP, and 99.87\% in CC. Isolated nodes are excluded to highlight the relative prevalence of each edge type among connected nodes.}
\label{fig:edge_types}
\end{figure}


To summarize, \(\mathcal{G}^{(o)}\) integrate proteins and ontology-specific GO terms via four edge types as defined.
Notably, these graphs remain highly sparse due to the scarcity of explicit interactions relative to all possible node pairs, a characteristic reflected in the distribution of edge types shown in Figure~\ref{fig:edge_types}.
\section{Proposed Method}
\label{sec:method}
\begin{figure*}[t] 
  \centering  \includegraphics[width=\linewidth]{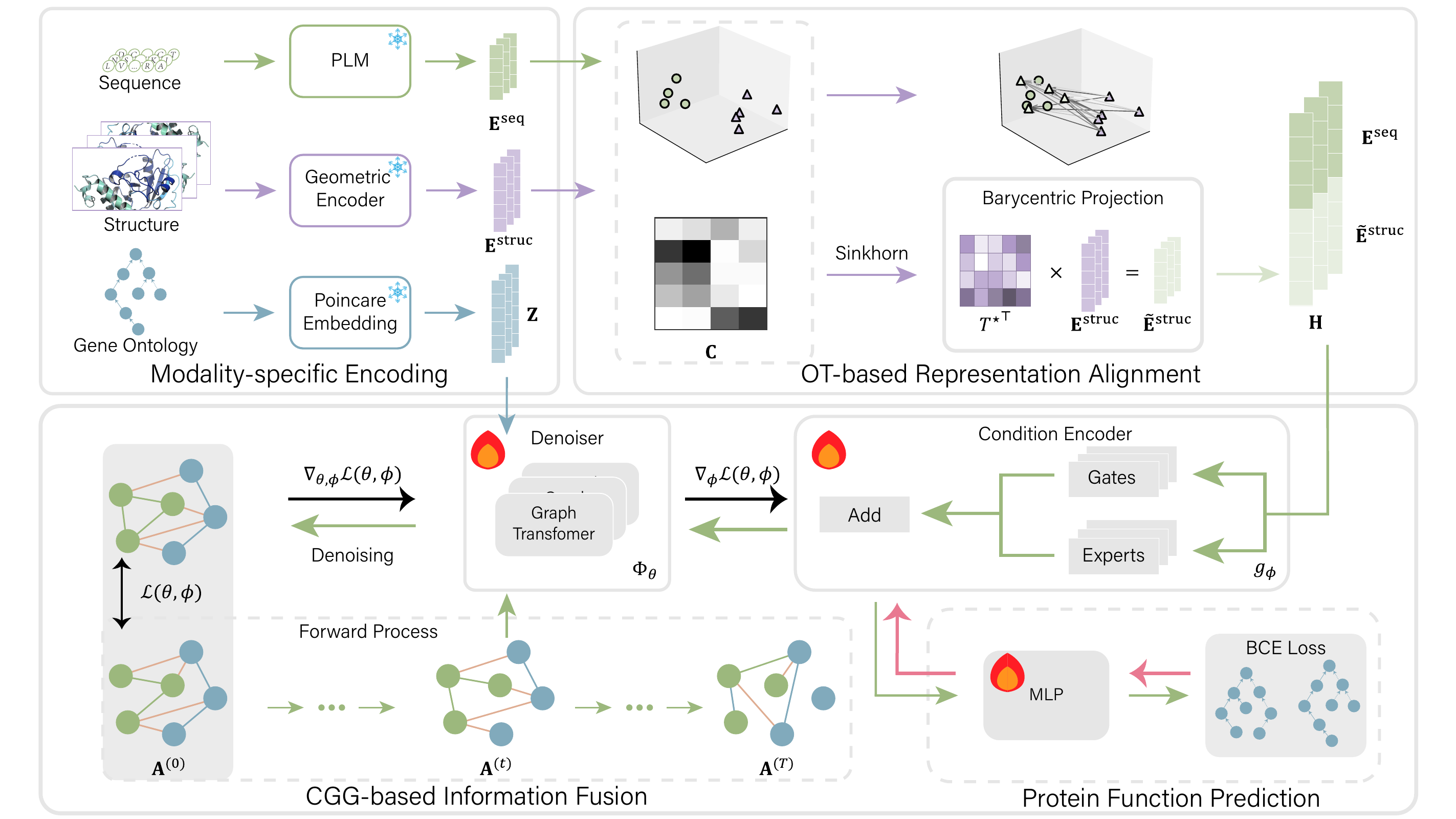}
  \caption{Pipeline of \ours, which adopts Optimal Transport (OT)-based representation alignment and Conditional Graph Generation (CGG)-based information fusion to learn robust and versatile protein representations for function prediction.}
  \label{fig:framework}
\end{figure*}

To address the key challenges outlined in Section~\ref{sec:problem}, we propose \ours, a multi-modal protein representation learning framework.
As illustrated in Figure~\ref{fig:framework}, \ours consists of four interconnected stages: First, in the \textbf{Modality-Specific Encoding} stage, we generate pre-trained representations for each modality: sequence, structure, and GO terms.
Optimal Transport (OT)-based representation alignment is then applied to sequence and structural embeddings, while GO embeddings are reserved for the subsequent information fusion step.
Second, in \textbf{OT-based representation alignment}, we align structural embeddings with the sequence embeddings space via solving an OT problem, mitigating cross-modal heterogeneity and laying a homogeneous basis for subsequent fusion.
Third, \textbf{CGG-based information fusion} integrates intrinsic and extrinsic information via conditional generative modeling of the PPI–GO heterogeneous graph, with a condition encoder learning to fuse the intrinsic features while absorbing the graph-aware knowledge from the backbone of the generative model.
Fourth, for \textbf{Protein Function Prediction}, the learned comprehensive multi-modal embeddings are fed to a classifier, which is trained to predict the probabilities for each GO term.

\subsection{Optimal Transport-based Representation Alignment}
Instead of re-training pre-trained encoders (which is costly and risks erasing learned knowledge), we leverage their fixed outputs and resolve heterogeneity via entropy-regularized optimal transport (OT)~\cite{Cuturi2013_Sinkhorn}. This procedure globally aligns the two modalities by projecting structural embeddings into the sequence space (using robust PLM representations as a reference), yielding homogeneous intrinsic features for downstream fusion. Notably, this design preserves pre-trained encoders’ strengths while avoiding contrastive biases (e.g., unreliable positives/negatives).

Specifically, the cost matrix for OT is derived from the frozen sequence and structure embeddings by taking the root-mean-square error (RMSE) between each structural dimension \(i\) and each sequence dimension \(j\) across all proteins:
\(
\mathbf{u}_i = \mathbf{E}^{\text{struc}}_{\colon,i} \in \mathbb{R}^{N_p}, \quad 
\mathbf{v}_j = \mathbf{E}^{\text{seq}}_{\colon,j} \in \mathbb{R}^{N_p},
\)
where $\mathbf{u}_i$ and $\mathbf{v}_j$ denote the $i$-th column of the structure embeddings and the $j$-th column of the sequence embeddings, respectively.
Then the cost matrix $\mathbf{C} \in \mathbb{R}^{d_{\text{struc}} \times d_{\text{seq}}}$ over the full protein set is defined as follows:
\begin{equation}
\mathbf{C}_{ij} = \sqrt{\frac{1}{N_p} \sum_{p=1}^{N_p} \left( \mathbf{E}^{\text{struc}}_{p,i} - \mathbf{E}^{\text{seq}}_{p,j} \right)^2},
\end{equation}
The RMSE formulation therefore yields a robust, bounded per-dimension discrepancy measure and mitigates numerical overflow from large squared differences. 

Treating each component as an sample from $\Re^{N_{\mathrm{P}}}$, we place uniform mass over these samples and define the corresponding (empirical) marginal measures for structure and sequence embedding components, respectively:
\(
\alpha(\cdot) = \tfrac{1}{d_{\mathrm{struc}}}\sum_{i=1}^{d_{\mathrm{struc}}}\delta_{\mathbf{u}_i}(\cdot), \quad
\beta(\cdot) = \tfrac{1}{d_{\mathrm{seq}}}\sum_{j=1}^{d_{\mathrm{seq}}}\delta_{\mathbf{v}_j}(\cdot),
\) 
where $\delta_{\mathbf{x}}(\cdot)$ denotes the Dirac delta measure concentrated at point $\mathbf{x}$.
 
With these (empirical) marginal measures defined, we solve the entropy-regularized OT problem:
\begin{equation}
 T^\star = \mathop{\mathrm{arg}\,\mathrm{min}}_{T \in \Pi(\alpha, \beta)} \langle T, \mathbf{C} \rangle + \varepsilon\,\mathcal{H}(T),   
\end{equation}
where $\varepsilon>0$ is the entropy-regularization coefficient controlling the trade-off between fidelity to the unregularized OT objective and solution smoothness (smaller $\varepsilon$ approaches the Wasserstein solution, larger $\varepsilon$ yields smoother, higher-entropy transport plans), $\mathcal{H}(T)=-\sum_{i,j}T_{ij}\log T_{ij}$ is the entropy term, and $\Pi(\alpha,\beta)=\left\{T\in\mathbb{R}_+^{d_{\mathrm{struc}}\times d_{\mathrm{seq}}} 
\;\middle|\; 
T\mathbf{1}_{d_{\mathrm{seq}}}=\frac{1}{d_{\mathrm{struc}}}\mathbf{1}_{d_{\mathrm{struc}}},\quad 
T^\top\mathbf{1}_{d_{\mathrm{struc}}}=\frac{1}{d_{\mathrm{seq}}}\mathbf{1}_{d_{\mathrm{seq}}} 
\right\}$ is the transport polytope, where $\mathbf{1}_d$ denotes the all-one vector in $\mathbb{R}^d$.

The resulting transport plan \(T^\star\) is computed once from the full dataset and then used to align individual protein embeddings via barycentric projection: $\tilde{\mathbf{E}}^{\mathrm{struc}} \;=\; \mathbf{E}^{\mathrm{struc}}{T^\star} \in \mathbb{R}^{N_p \times d_{\mathrm{seq}}}$.
Here the mapped vectors, the rows of \(\tilde{\mathbf{E}}^{\mathrm{struc}}\) are thus embedded into the sequence embedding space and can be more effectively fused with \(\mathbf{E}^{\mathrm{seq}}\) in downstream stages.
Then, the intrinsic protein representation is obtained by concatenating the aligned representations of these two modalities:
\begin{equation}
\mathbf{H} = \left[\mathbf{E}^{\mathrm{seq}}; \tilde{\mathbf{E}}^{\mathrm{struc}}\right]\in \mathbb{R}^{N_p\times 2d_{\mathrm{seq}}}.    
\end{equation}

\subsection{Conditional Graph Generation-based Information Fusion}

For each protein node in our heterogeneous graph, we can induce its $k$-hop ego-graph. For brevity, we reuse $\mathbf{A}$, $\mathbf{H}$, and $\mathbf{Z}$ to denote the binary adjacency tensor, the embeddings of protein nodes, and the embeddings of GO-term nodes within this ego-graph, respectively.
Each protein's ego-graph can be regarded as an i.i.d. sample drawn from an underlying data distribution $q(\mathbf{A}\mid\mathbf{H},\mathbf{Z})$.
In \ours, \textit{conditional graph generation (CGG)} refers to estimating this conditional distribution.

Motivated by the recent success of graph diffusion models, we adapt DiGress~\cite{DiGress2022} to estimate \(q(\mathbf{A}\mid\mathbf{H},\mathbf{Z})\).
It comprises a forward (diffusion) process that progressively perturbs the adjacency tensor and a reverse (denoising) process that refines it.
In our formulation, $\mathbf{H}$ is first fed into a condition encoder to produce condition embeddings $\tilde{\mathbf{H}}$. These embeddings, together with $\mathbf{Z}$, are then fed into a Graph Transformer-based denoiser, which takes advantage of the embeddings to better reconstruct \(\mathbf{A}\).
By jointly optimizing the generative objective w.r.t. both the denoiser and the condition encoder, the latter is encouraged to distill informative cues from the input intrinsic features $\mathbf{H}$ that enhance the prediction of corresponding extrinsic relationships, as further supported by our theoretical analysis in Section~\ref{sec:analysis}.

\subsubsection{Forward process}
To account for the uncertainty inherent in the sparse and noisy adjacency tensor \(\mathbf{A}\), we adopt the Markovian discrete noising process proposed from DiGress. This process defines a sequence of discrete random variables $\mathbf{A}^{(t)}, t=0,\ldots,T$, representing progressively corrupted edge-type information. The boundary conditions are specified such that $\mathbf{A}^{(0)}$ follows the true data distribution $q$, while $\mathbf{A}^{(T)}$ conforms to the empirical distribution of relation types observed in the training data.

For each node pair \((u,v)\) in an ego-graph, let \(\mathbf{A}_{u,v}^{(t)} \in \{0,1\}^{|\mathcal{R}|}\) denote the one-hot row vector at timestep \(t\) for denoting the edge-type, where \(\mathbf{A}_{u,v,r}^{(t)} = 1\) iff edge \((u,v)\) has relation type \(r\) at \(t\).
We define per-step transition matrices \(\mathbf{Q}^{(t)} \in \Re^{|\mathcal{R}| \times |\mathcal{R}|}\) (row-stochastic) for $t=1,\ldots,T$, as $\mathbf{Q}^{(t)} \triangleq \alpha_t \mathbf{I} + (1 - \alpha_t) \mathbf{1} \mathbf{m}^\top
$, where \(\mathbf{m}\in\Re^{|\mathcal{R}|}\) denotes the marginal distribution over relation types \(\mathcal{R}\) in the training data, $\mathbf{I}$ is the $|\mathcal{R}|$-dimensional identity matrix (used to retain the probability of preserving the original relation type at each noising step), $\mathbf{1}$ denotes a \(|\mathcal{R}|\)-dimensional column vector with all elements equal to 1, and $\alpha_t$ follows a shifted cosine schedule~\cite{diffusion_cosine}.

Then, the cumulative transition kernel becomes $\bar{\mathbf{Q}}^{(t)} = \prod_{\tau=1}^t \mathbf{Q}^{(\tau)} = \bar{\alpha}_t \mathbf{I} + (1 - \bar{\alpha}_t) \mathbf{1} \mathbf{m}^\top
$, with $\bar{\alpha}_t = \prod_{\tau=1}^t \alpha_\tau$. This yields the closed-form transition distribution: $q\bigl(\mathbf{A}_{u,v}^{(t)} \mid \mathbf{A}_{u,v}^{(0)}\bigr) = \mathbf{A}_{u,v}^{(0)} \bar{\mathbf{Q}}^{(t)}$.
The joint conditional distribution over all edges in the graph is then given by the product of the individual edge-type distributions, assuming conditional independence across different node pairs \((u,v)\):
\begin{equation}
q\bigl(\mathbf{A}^{(t)} \mid \mathbf{A}^{(0)}\bigr)
\triangleq \prod_{(u,v) \in \mathcal{V}\times \mathcal{V}}
q\bigl(\mathbf{A}_{u,v}^{(t)} \mid \mathbf{A}_{u,v}^{(0)}\bigr).
\label{eq:forward}
\end{equation}
As \(t \to \infty\), \(\bar{\alpha}_t \to 0\) and \( (1 - \bar{\alpha}_t) \to 1\), so that each edge-type distribution converges to the marginal distribution \(\mathbf{m}\).

\subsubsection{Reverse process}  
The forward process corrupts clean $\mathbf{A}$ into noisy states, and the reverse process aims to invert this corruption to recover biologically meaningful edge types.
To this end, we follow DiGress to approximate $q(\mathbf{A}^{(t-1)}\mid \mathbf{A}^{(t)})$ by $q(\mathbf{A}^{(t-1)}\mid \mathbf{A}^{(t)},\mathbf{A}^{(0)}) \propto q(\mathbf{A}^{(t)}\mid \mathbf{A}^{(t-1)})q(\mathbf{A}^{(t-1)}\mid \mathbf{A}^{(0)})$.
However, as the clean sample $\mathbf{A}^{(0)}$ is unavailable in advance during sampling, we need to predict it based on the current noisy state.
Therefore, we parameterize the reverse transition distribution \(p_{\theta}(\mathbf{A}^{(t-1)} \mid \mathbf{A}^{(t)}) \triangleq \sum_{\mathbf{A}^{(0)}} p_{\theta}(\mathbf{A}^{(0)} \mid \mathbf{A}^{(t)}) q(\mathbf{A}^{(t-1)} \mid \mathbf{A}^{(t)},\mathbf{A}^{(0)}) \) using a denoising network to estimate $p_{\theta}(\mathbf{A}^{(0)} \mid \mathbf{A}^{(t)})$.

Critically, our setting extends this formulation to CGG, that is, we condition \(p_\theta(\mathbf{A}^{(t-1)} \mid \mathbf{A}^{(t)})\) (with \(\theta\) denoting the parameters of the denoising network) on intrinsic protein embeddings \(\tilde{\mathbf{H}}\) and extrinsic GO-term embeddings \(\mathbf{Z}\).
Thus, we adopt classifier-free guidance~\cite{ho2021_CFG} to estimate $p_{\theta,\phi}(\mathbf{A}^{(0)} \mid \mathbf{A}^{(t)}, \mathbf{H}, \mathbf{Z})$.
Specifically, we design a condition encoder $g_{\phi}$ to produce condition embedding $\tilde{\mathbf{H}}$ from the given intrinsic features $\mathbf{H}$, as we will detail later.
The denoising network \(\Phi_{\theta}\) (implemented as a Graph Transformer~\cite{graphTransfomre}) takes three inputs: the noisy adjacency tensor \(\mathbf{A}^{(t)}\), the conditioning signals \(\mathcal{C}=\{\tilde{\mathbf{H}},\mathbf{Z}\} \), and the diffusion timestep \(t\).
In our adopted Graph Transformer architecture, $\Phi_{\theta}$ treats $\mathcal{C}$ as the input node tokens, $\mathbf{A}^{(t)}$ as the input edge tokens, and $t$ as the graph-level token.
Its outputs on edge tokens encode the parameterized probability distribution, denoted by:
\begin{equation}
\widehat{\mathbf{P}}^{(t)} = \Phi_{\theta}\left(\mathbf{A}^{(t)},\, \mathcal{C},\, t\right) \in [0,1]^{|\mathcal{V}|\times|\mathcal{V}|\times|\mathcal{R}|},    
\label{eq:condpred}
\end{equation}
where \(\widehat{\mathbf{P}}^{(t)}_{i,j,r}\) indicates \(p_{\theta,\phi}\left(\mathbf{A}_{i,j,r}^{(0)}=1 | \mathbf{A}^{(t)}, \mathbf{H}, \mathbf{Z} \right)\).
To implement classifier-free guidance, we randomly drop \(\mathcal{C}\) during the training, allowing the model to simultaneously learn both conditional and unconditional (with/without \(\mathcal{C}\)) probability distribution of \(\mathbf{A}^{(0)}\).

\noindent\textbf{Details of condition encoder.} Reverse denoiser requires a compact, informative conditioning signal that summarizes intrinsic protein descriptions. 
To provide this, we employ a per-node Mixture-of-Experts (MoE) as the condition encoder: 
for each protein $p$, the MoE consumes the OT-aligned intrinsic input \(\mathbf{H}_p\) and produces the fused context embedding \(\tilde{\mathbf{H}}_p\); 
the denoiser then takes \(\tilde{\mathbf{H}}\) (encoding intrinsic protein patterns) and \(\mathbf{Z}\) (capturing extrinsic functional knowledge) as guiding conditions to steer its recovery of biologically meaningful edge types in \(\mathbf{A}^{(0)}\).
The MoE is employed here for three key reasons: its gating mechanism enables adaptive routing to specialized experts, its expert specialization enhances expressivity while being inherently suited to sparse, noisy PPI–GO signals (by focusing expert capacity on relevant patterns), and this design achieves these benefits with substantially lower parameter cost than re-training large pre-trained encoders.

The MoE module operates per protein node to integrate multi-modal features. For each protein node $p$ with input features $\mathbf{H}_p$, the MoE comprises $K$ experts (each implemented as a linear layer $f_k:\mathbb{R}^{2d_{\text{seq}}} \rightarrow  \mathbb{R}^{d_h}$) and a gating network $\mathbf{g}_p=\mathrm{Softmax}\bigl(\mathrm{Linear}(\mathbf{H}_{p})\bigr) \in \mathbb{R}^K$.
Then, the fused node embedding is calculated as follows:
\begin{equation}
\tilde{\mathbf{H}}_{p}=g_{\phi}(\mathbf{H}_{p})=\sum_{k=1}^K \mathbf{g}_{p,k} \cdot f_k(\mathbf{H}_{p}) \in \mathbb{R}^{d_h},
\label{eq:H}
\end{equation}
where \(\phi\) denotes learnable parameters of the MoE.

We train the model incorporating this MoE module to generate protein embeddings that intrinsically encode external relational knowledge. During graph generation, it serves as a condition encoder that transforms intrinsic features into context-rich representations, supplies biologically meaningful conditioning signals to guide denoising, and enables embeddings to capture complex relational patterns in the heterogeneous PPI–GO graph.

\subsubsection{Training objective}
In each training step, we first sample a set of central protein nodes from the graph \(\mathcal{G}^{(o)}\), induce an ego-graph for each sampled node, and derive the clean adjacency tensor \(\mathbf{A}^{(0)}\) and the node feature matrix ($\mathbf{H}$ and $\mathbf{Z}$) corresponding to each ego-graph.
We then draw a timestep \(t \sim \mathrm{Uniform}\{1, \dots, T\}\) and generate a noisy adjacency \(\mathbf{A}^{(t)} \sim q(\cdot \mid \mathbf{A}^{(0)})\) via the forward process (Eq.~\ref{eq:forward}).
Concurrently, the OT-aligned intrinsic input \(\mathbf{H}\) is fed into the condition encoder (Eq.~\ref{eq:H}), which produces fused context embeddings \(\tilde{\mathbf{H}}\) to serve as guiding signals for the denoiser.

To make the dependence on the $t$, $\mathbf{A}^{(0)}$ and $\mathbf{H}$ explicit, we define the per-timestep reconstruction loss as follows:

\begin{equation}
\mathcal{L}_t(\theta,\phi)= \mathbb{E}_{\mathbf{A}^{(0)}, \mathbf{H}, \mathbf{A}^{(t)}}\left[\frac{1}{|\mathcal{V}|^2}\sum_{1\leq i,j \leq |\mathcal{V}|}\mathrm{CE}\bigl(\mathbf{A}_{i,j}^{(0)},\widehat{\mathbf{P}}^{(t)}_{i,j} \bigr)\right],
\label{eq:reconstruction}
\end{equation}
where $\widehat{\mathbf{P}}^{(t)}$ is the denoiser's output at timestep $t$ according to Eq.~\ref{eq:condpred}. The generative objective is defined as the average over all timesteps as follows:
\begin{equation}
\label{eq:time_integrated_loss}
\mathcal{L}(\theta,\phi)
\;=\;
\mathbb{E}_{t\sim\mathrm{Uniform}\{1,\dots,T\}}\bigl[\mathcal{L}_t(\theta,\phi)\bigr]
\;=\;
\frac{1}{T}\sum_{t=1}^T \mathcal{L}_t(\theta,\phi).
\end{equation}

By forcing $\widehat{\mathbf{P}}^{(t)}$ to accurately reconstruct each edge’s type from its noisy counterpart, conditioned on both protein and GO‑term embeddings, the condition encoder $g_{\phi}$ learns to extract informative cues from $\mathbf{H}$ for the Graph Transformer, while the latter takes advantage of $\tilde{\mathbf{H}}$ for the reconstruction objective and conversely offer graph-aware knowledge to $\phi$ via propagated gradients.

\noindent\textbf{Scalability.}
To scale this conditional generative modeling, we sample protein-centered ego-graphs using a popular neighbor sampler~\cite{GraphSAGE}. In each step of the training course, we randomly draw a mini-batch of protein nodes from $\mathcal{P}$, expand each into a sampled ego-graph with the hop and type-specific number of neighbors, and perform the training step of conditional diffusion model on these ego-graphs. Graph sampling helps us avoid ego-graphs of hub nodes, which will break the memory limitation.
In this sense, each ego-graph is an instance drawn from the underlying data distribution.
\begin{algorithm}[t]  
\caption{Training the Conditional Graph Diffusion Model}  
\label{alg:end-to-end-training}  
\begin{algorithmic}[1]
\REQUIRE
Heterogeneous graph $\mathcal{G}$, Intrinsic protein embeddings $\mathbf{H}$, GO term embeddings $\mathbf{Z}$, Diffusion steps $T$, and Transition matrices $\{Q^{(t)}\}_{t=1}^T$.
\ENSURE  
Trained $g_\phi$ and $\Phi_{\theta}$
\STATE Initialize parameters $\theta$ (for $\Phi_{\theta}$) and $\phi$ (for $g_{\phi}$) randomly
\WHILE{not converged}
    \STATE Sample a protein $p$ randomly from the protein set $\mathcal{P}$ 
    \STATE Get $p$-centered ego-graph via \texttt{NeighborLoader}\((\mathcal{G}, \text{center}=p)\)
    \STATE \textit{// Forward Process} 
    \STATE Prepare $\mathbf{A}^{(0)}, \mathbf{H}, \mathbf{Z}$ // Clean adjacency tensor and embeddings of the protein and GO term nodes of the sampled ego-graph
    \STATE Sample $t \sim \mathrm{Uniform}\{1,\dots,T\}$
    \STATE Sample $\mathbf{A}^{(t)}\sim q(\mathbf{A}^{(t)} \mid \mathbf{A}^{(0)})$ // see Eq. \ref{eq:forward}

    \STATE \textit{// Reverse Denoising}
    \STATE $\tilde{\mathbf{H}} \gets g_{\phi}(\mathbf{H})$ // Encode protein condition
    \STATE $\mathcal{C} \gets \{\tilde{\mathbf{H}}\} \cup \{\mathbf{Z}\}$ // Assemble conditioning signals
    \STATE $\widehat{\mathbf{P}}^{(t)} \gets \Phi_{\theta}(\mathbf{A}^{(t)}, \mathcal{C}, t)$ // Predict clean edge distribution
    
    \STATE \textit{// Optimization}
    \STATE $\mathcal{L}_t(\theta,\phi) \leftarrow \frac{1}{|\mathcal{V}|^2}\sum_{1\leq i,j \leq |\mathcal{V}|}\mathrm{CE}\bigl(\mathbf{A}_{i,j}^{(0)},\widehat{\mathbf{P}}^{(t)}_{i,j}\bigr)$ // see Eq.~\ref{eq:reconstruction}
    \STATE Update parameters $\{\theta,\phi\}\leftarrow\mathrm{AdamW}(\nabla_{\theta,\phi}\mathcal{L}_t(\theta,\phi))$
\ENDWHILE
\end{algorithmic}
\end{algorithm}

\subsection{Protein function prediction}
\label{sec:prediction}

With the condition encoder learned by CGG, we use its produced protein embeddings \(\tilde{\mathbf{H}}_p\) (see Eq.~\ref{eq:H}) to tackle the protein function prediction task.
For each specific ontology $o\in\{\text{MF, BP, CC}\}$, $\mathcal{O}^{(o)}$ are our considered GO terms.
Our prediction task is multi-class multi-label classification, that is, we perform binary classification regarding each GO term, and each protein is allowed to be labeled with multiple GO terms.
Formally, for each protein $p$, let \(Y_p = \{Y_{p,\tau}\}_{\tau \in \mathcal{O}^{(o)}}\) denotes a collection of random variables, with \(Y_{p,\tau}=1\) if $p$ has the GO term \(\tau\), otherwise $Y_{p,\tau}=0$. Our goal is to estimate the ground-truth probability distribution $q(Y_{p} | p)$.

To this end, we use a multi-layer perceptron (MLP) as the classifier: it takes \(\tilde{\mathbf{H}}_p\) as input, outputs logits for each GO term \(\tau\), and applies a sigmoid activation to produce \(\hat{p}(Y_{p,\tau}|p)\).
Under the standard conditional independence assumption, the likelihood of ground-truth realization $y_p$ can be factorized into individual GO term's likelihoods.
Thus, we optimize the classifier by maximize the log-likelihood as follows:
$$
\log{\prod_{p\in\mathcal{P}}\hat{p}(Y_p =y_p |p)}=\log{\prod_{p\in\mathcal{P}}\prod_{\tau\in\mathcal{O}^{(o)}}\hat{p}(Y_{p,\tau}=y_{p,\tau} |p)}
=\sum_{p\in\mathcal{P}}\sum_{\tau\in\mathcal{O}^{(o)}}\log{ \hat{p}(Y_{p,\tau}=y_{p,\tau} |p) }.
$$

At the inference stage, estimated probabilities are computed using this MLP and sigmoid.
However, GO terms are actually non-mutually exclusive.
Therefore, we perform hierarchical true-path propagation as follows:
\begin{equation}
\label{eq:propagate}
\hat{p}(Y_{p,\tau}=1) \leftarrow \max\bigl(\hat{p}(Y_{p,\tau}=1),\; \max_{\tau'\in\mathcal{D}(\tau)} \hat{p}(Y_{p,\tau'}=1)\bigr),
\end{equation}
where \(\mathcal{D}(\tau)\) denotes the set of descendants of \(\tau\).
The implementation details and hyperparameters are provided in Section~\ref{sec:experiment setup}.
\section{Analysis}
\label{sec:analysis}

To better understand \ours, one of the most crucial questions is what kind of knowledge our objective Eq.~\ref{eq:reconstruction} pushes $\tilde{\mathbf{H}}_{p}$ to encode.
In this section, we answer this question by establishing our analysis on the theoretical results of a diffusion representation learning framework Graffe~\cite{Graffe}.
They also employ a conditional diffusion model and optimize a denoising score matching loss regarding both the backbone of denoiser and the condition encoder, which shares the same neural architecture as \ours.
Their theoretical results show that, by this means, the mutual information between the target variable and the condition embedding is lower bounded by their denoising score matching loss, which implies that condition encoder is encouraged to learn representations that are information for reconstructing the clean data.

However, our conditional diffusion model is based on DiGress~\cite{DiGress2022}, where the target variable is discrete, and the diffusion objective is simplified as in cross-entropy form (see Eq.~\ref{eq:reconstruction}) rather than score matching.
In addition, our condition encoder processes the conditional variable (i.e., $\mathbf{H}$ in our case) instead of the target variable ($\mathbf{A}$ in our case) as Graffe.
Therefore, we provide the following proposition for our case.

\begin{proposition}
Minimizing our diffusion objective $\mathcal{L}(\theta, \phi)\triangleq \mathbb{E}_{t}\mathbb{E}_{\mathbf{A}^{(0)}, \mathbf{H}, \mathbf{A}^{(t)} } \text{CE}( \mathbf{A}^{(0)}, \widehat{\mathbf{P}})$ encourages the condition encoder to produce condition embedding that maximizes its conditional mutual information w.r.t. $\mathbf{A}^{(0)}$.
\end{proposition}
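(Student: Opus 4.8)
The plan is to recognise the per-timestep reconstruction loss $\mathcal{L}_t$ as a cross-entropy between the true denoising posterior and the model prediction $\widehat{\mathbf{P}}^{(t)}$, and then exploit the elementary identity relating cross-entropy, conditional entropy and mutual information. Writing $\tilde{\mathbf{H}} = g_\phi(\mathbf{H})$ for the condition embedding and letting $q$ denote the true posterior over clean edge types, I would first fix the conditioning variables and observe
\begin{equation}
\mathcal{L}_t(\theta,\phi) = \mathbb{E}\bigl[-\log p_{\theta,\phi}(\mathbf{A}^{(0)} \mid \mathbf{A}^{(t)}, \tilde{\mathbf{H}}, \mathbf{Z})\bigr] = H\bigl(\mathbf{A}^{(0)} \mid \mathbf{A}^{(t)}, \tilde{\mathbf{H}}, \mathbf{Z}\bigr) + \mathbb{E}\bigl[\mathrm{KL}\bigl(q \,\|\, p_{\theta,\phi}\bigr)\bigr].
\end{equation}
Since the KL term is non-negative and vanishes exactly when the denoiser $\Phi_\theta$ realises the true posterior, $\mathcal{L}_t$ upper-bounds the conditional entropy, with the bound tight at the denoiser optimum.

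The second step injects the mutual information via the decomposition
\begin{equation}
I\bigl(\mathbf{A}^{(0)}; \tilde{\mathbf{H}} \mid \mathbf{A}^{(t)}, \mathbf{Z}\bigr) = H\bigl(\mathbf{A}^{(0)} \mid \mathbf{A}^{(t)}, \mathbf{Z}\bigr) - H\bigl(\mathbf{A}^{(0)} \mid \mathbf{A}^{(t)}, \tilde{\mathbf{H}}, \mathbf{Z}\bigr).
\end{equation}
The crucial observation is that the leading term $H(\mathbf{A}^{(0)} \mid \mathbf{A}^{(t)}, \mathbf{Z})$ is a property of the data and the noising kernel alone, hence independent of $\phi$. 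Combining the two steps, at the inner optimum over $\theta$ one has $\min_\theta \mathcal{L}_t = H(\mathbf{A}^{(0)} \mid \mathbf{A}^{(t)}, \mathbf{Z}) - I(\mathbf{A}^{(0)}; \tilde{\mathbf{H}} \mid \mathbf{A}^{(t)}, \mathbf{Z})$, so minimising the residual loss over $\phi$ is precisely equivalent to maximising the conditional mutual information. I would then lift this to the full objective $\mathcal{L}(\theta,\phi) = \mathbb{E}_t[\mathcal{L}_t(\theta,\phi)]$ by taking the expectation over $t$ on both sides, turning the statement into one about the time-averaged conditional mutual information $\mathbb{E}_t\, I(\mathbf{A}^{(0)}; \tilde{\mathbf{H}} \mid \mathbf{A}^{(t)}, \mathbf{Z})$.

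The main obstacle is the tightness step: the equivalence between minimising $\mathcal{L}$ and maximising $I$ relies on the denoiser $\Phi_\theta$ being expressive enough that $\min_\theta$ drives the KL gap to zero, so that the residual loss genuinely equals the conditional entropy rather than merely bounding it. I would make this explicit as a sufficient-capacity assumption on the Graph Transformer, and argue it is the discrete-target counterpart of the optimal-score assumption underpinning the Graffe bound. A secondary subtlety is classifier-free guidance: because $\mathcal{C}$ is randomly dropped during training, the clean identity applies to the conditional branch, while the unconditional branch contributes only the $\phi$-independent quantity $H(\mathbf{A}^{(0)} \mid \mathbf{A}^{(t)}, \mathbf{Z})$, so it leaves the maximisation over $\phi$ unaffected. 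Finally, I would remark that the data processing inequality $I(\mathbf{A}^{(0)}; \tilde{\mathbf{H}}) \le I(\mathbf{A}^{(0)}; \mathbf{H})$ shows the encoder cannot manufacture information but is driven to expose, within the compressed summary $\tilde{\mathbf{H}}$, the intrinsic features most predictive of the extrinsic relational structure $\mathbf{A}^{(0)}$.
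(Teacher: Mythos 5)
Your proof follows essentially the same route as the paper's: both decompose the cross-entropy objective as a conditional entropy plus a non-negative KL term, invoke the identity $I(\mathbf{A}^{(0)};\tilde{\mathbf{H}}\mid\mathbf{A}^{(t)}) = H(\mathbf{A}^{(0)}\mid\mathbf{A}^{(t)}) - H(\mathbf{A}^{(0)}\mid\mathbf{A}^{(t)},\tilde{\mathbf{H}})$, and use the fact that the data-entropy term is independent of $(\theta,\phi)$ to conclude that minimizing the loss maximizes (a lower bound on) the conditional mutual information. Your additions are refinements rather than a different argument: you keep $\mathbf{Z}$ explicit in the conditioning, you state the denoiser-capacity assumption needed for the bound to be tight (the paper settles for the one-sided bound, which already suffices for the ``encourages'' claim), and you append the classifier-free-guidance and data-processing-inequality remarks.
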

\begin{proof}
To keep our notation terse, we will not mention the timestep w.l.o.g.
We first rewrite our objective as follows:

\begin{align*}
&\mathcal{L}(\theta,\phi)=\sum_{\mathbf{A}^{(0)}} q(\mathbf{A}^{(0)}) \sum_{\mathbf{H}} q( \mathbf{H} |\mathbf{A}^{(0)}) \sum_{\mathbf{A}^{(t)}} q(\mathbf{A}^{(t)}|\mathbf{A}^{(0)}) \cdot\left[ -\log{ p_{\theta}(\mathbf{A}^{(0)}|\mathbf{A}^{(t)}, \tilde{\mathbf{H}}) } \right]  \\
=&\sum_{\tilde{\mathbf{H}}} \sum_{\mathbf{A}^{(t)}} 
    \underbrace{ \bigg( \sum_{\mathbf{A}^{(0)}} q(\mathbf{A}^{(0)}) q(\mathbf{A}^{(t)}|\mathbf{A}^{(0)}) \sum_{\mathbf{H}} q(\mathbf{H}|\mathbf{A}^{(0)}) \mathbf{1}_{\tilde{\mathbf{H}}=g_{\phi}(\mathbf{H})} \bigg) }_{p(\mathbf{A}^{(t)},\tilde{\mathbf{H}})} \\
&\quad \cdot\bigg( \sum_{\mathbf{A}^{(0)}} \frac{ q(\mathbf{A}^{(0)}) q(\mathbf{A}^{(t)}|\mathbf{A}^{(0)}) \sum_{\mathbf{H}} q(\mathbf{H}|\mathbf{A}^{(0)}) \mathbf{1}_{\tilde{\mathbf{H}}=g_{\phi}(\mathbf{H})} }{ p(\mathbf{A}^{(t)}, \tilde{\mathbf{H}}) }  \cdot \left[ -\log p_{\theta}(\mathbf{A}^{(0)}|\mathbf{A}^{(t)},\tilde{\mathbf{H}}) \right] \bigg) \\
=&\sum_{\mathbf{A}^{(t)}} \sum_{\tilde{\mathbf{H}}} p(\mathbf{A}^{(t)}, \tilde{\mathbf{H}}) \, \mathbb{E}_{\mathbf{A}^{(0)} \sim p^{*}} \left[ -\log{ p_{\theta}(\mathbf{A}^{(0)} | \mathbf{A}^{(t)}, \tilde{\mathbf{H}}) } \right],
\end{align*}
where $\mathbf{1}_{\cdot}$ denotes the indicator function, and $p^{*}$ denotes the posterior of the clean data $\mathbf{A}^{(0)}$ given the noisy sample $\mathbf{A}^{(t)}$ and condition embedding $\tilde{\mathbf{H}}$, namely $p(\mathbf{A}^{(0)} | \mathbf{A}^{(t)}, \tilde{\mathbf{H}})$.

Then, by the definition of KL-divergence, we can further rewrite it as follows:
\[
\mathcal{L}(\theta,\phi) = H(\mathbf{A}^{(0)} | \mathbf{A}^{(t)}, \tilde{\mathbf{H}} ) + \text{KL}\big( p{}^{*} \| p_{\theta}(\mathbf{A}^{(0)} | \mathbf{A}^{(t)}, \tilde{\mathbf{H}} ) \big)\geq H(\mathbf{A}^{(0)} | \mathbf{A}^{(t)}, \tilde{\mathbf{H}} ),
\]
where the last inequality comes from the fact that the KL-divergence is non-negative.

Meanwhile, recall the equality:
$I(\mathbf{A}^{(0)} ; \tilde{\mathbf{H}} | \mathbf{A}^{(t)}) = H( \mathbf{A}^{(0)} | \mathbf{A}^{(t)}) - H( \mathbf{A}^{(0)} | \tilde{\mathbf{H}}, \mathbf{A}^{(t)})$.
Combining this equality with the above inequality, we have:
\begin{align*}
I(\mathbf{A}^{(0)} ; \tilde{\mathbf{H}} | \mathbf{A}^{(t)}) &\geq H( \mathbf{A}^{(0)} | \mathbf{A}^{(t)}) - \mathcal{L}(\theta, \phi),
\end{align*}
where the first term in the RHS is independent of both $\phi$ and $\theta$.
Hence, this inequality indicates that minimizing our diffusion objective is to maximize a lower bound of the LHS, which completes the proof.
\end{proof}
As the adjacency $\mathbf{A}$ contains extrinsic information about proteins, maximizing the conditional mutual information means that the condition embeddings $\tilde{\mathbf{H}}$ absorb extrinsic information.
Since the condition encoder $g_{\phi}(\cdot)$ takes intrinsic information $\mathbf{H}$ to produce $\tilde{\mathbf{H}}$, it fuses both kinds of information.
Besides, the noisy samples at each timestep preserve the information of $\mathbf{A}^{(0)}$ at different levels and scales, which, intuitively speaking, tends to make $\tilde{\mathbf{H}}$ robust and versatile.
\section{Experiments}
\label{sec:exp}

\begin{table}[t]
\centering
\begin{tabular}{ccccccc}
\hline
\multirow{2}{*}{Methods} &
  \multicolumn{2}{c}{MF} &
  \multicolumn{2}{c}{BP} &
  \multicolumn{2}{c}{CC} \\
            & Fmax  & AUPR  & Fmax           & AUPR  & Fmax           & AUPR  \\ \hline
Diamond     & 0.592 & 0.387 & 0.429          & 0.197 & 0.573          & 0.283 \\
BlastKNN    & 0.616 & 0.484 & 0.445          & 0.258 & 0.596          & 0.384 \\
TALE        & 0.260 & 0.158 & 0.253          & 0.152 & 0.548          & 0.510 \\
ATGO        & 0.454 & 0.442 & 0.396          & 0.341 & 0.602          & 0.596 \\
DeepGOCNN   & 0.396 & 0.326 & 0.323          & 0.254 & 0.573          & 0.567 \\
DeepGOPlus  & 0.589 & 0.548 & 0.438          & 0.365 & 0.626          & 0.618 \\
ATGO+       & 0.622 & 0.599 & 0.456          & 0.399 & 0.633          & 0.636 \\ \hline
GearNet     & 0.612 & 0.627 & 0.437          & 0.402 & 0.622          & 0.655 \\
ESM-GearNet & 0.635 & 0.658 & 0.451          & 0.421 & 0.637          & 0.671 \\ \hline
TALE+       & 0.602 & 0.543 & 0.427          & 0.327 & 0.608          & 0.591 \\
DeepGO      & 0.301 & 0.204 & 0.328          & 0.260 & 0.574          & 0.580 \\
DeepGraphGO & 0.562 & 0.533 & 0.432          & 0.389 & 0.634          & 0.590 \\
DPFunc      & 0.635 & 0.658 & \textbf{0.466} & 0.434 & \textbf{0.657} & 0.695 \\
\ours &
  \textbf{0.642*} &
  \textbf{0.671*} &
  0.462 &
  \textbf{0.438*} &
  0.653 &
  \textbf{0.697} \\ \hline
\end{tabular}
\caption{Comparison of baseline methods by AUPR and Fmax on various GO branches. Results are reported as averages over 5 independent runs. 
Statistical significance was assessed via Welch's independent samples t-test (equal variances not assumed) with two-tailed p-values, where values marked with an asterisk (*) indicate the baseline method performed significantly better than DPFunc ($p<0.05$) and all other values show no significant improvement.}
\label{tab:main-result}
\end{table}
To assess the effectiveness of \ours in protein function prediction tasks, this section presents a comprehensive experimental evaluation, with content organized to align with subsequent detailed analyses. 
Specifically, we first introduce the \textit{Experiment Setup}, which covers evaluation metrics and implementation specifics—laying a solid foundation for the reproducibility of our experiments.
Next, we present the \textit{Main Results}, where comparisons between \ours and relevant baseline models verify its performance advantages in GO prediction.
We then conduct an \textit{Ablation Study} to dissect the contribution of each core component in the \ours, clarifying the rationale behind our design choices.
Following this, a \textit{Hyperparameter Sensitivity} analysis is performed to examine how key hyperparameters influence model performance, ensuring the robustness of our framework.
Finally, we complement quantitative results with \textit{Qualitative Evaluation and Case Study}, which provide intuitive insights into \ours’ predictive behavior and further deepen the understanding of its effectiveness in capturing biological meaningful patterns.
\subsection{Experiment Setup}
\label{sec:experiment setup}
\noindent\textbf{Baselines.}
To ensure a comprehensive evaluation of our proposed method, we adopt the same set of baseline models and their corresponding evaluation results as utilized in the DPFunc study \cite{DPFunc_NatCommun2024}. This approach facilitates a consistent and fair comparison across various methodologies. The baseline models encompass:
\textit{(1) Sequence Alignment-Based Methods:} Traditional tools that rely on sequence similarity for function prediction, including BlastKNN~\cite{blastknn_cafa} and DIAMOND~\cite{diamond}.
\textit{(2) Sequence-Based Deep Learning Methods:} Models that leverage deep learning architectures to extract features from protein sequences, such as DeepGOCNN~\cite{kulmanov_deepgozero_2022}, TALE~\cite{hou2021tale}, and ATGO~\cite{zhang2022atgo}.
\textit{(3) PPI Network-Based Methods:} Approaches that incorporate protein-protein interaction networks into their predictive frameworks, exemplified by DeepGO~\cite{DeepGO_2017} and DeepGraphGO~\cite{DEEPGRAPHGO}.
\textit{(4) Composite Methods:} Hybrid models that integrate sequence-based predictions with alignment-based methods to enhance performance, including DeepGOPlus~\cite{deepgoplus_Kulmanov2019}, TALE+~\cite{hou2021tale}, and ATGO+~\cite{zhang2022atgo}.

Additionally, we include GearNet~\cite{zhang2022gearnet} and ESM-GearNet~\cite{zhang2023systematic} as structural baselines in our main experiments.
For these models, we extract frozen embeddings from their pre-trained encoders and train the same  classifier as used in our pipeline for protein function prediction tasks, ensuring a fully consistent experimental setting across all methods.

By employing these established baselines and their reported results, we aim to provide a reliable assessment of our method's effectiveness in protein function prediction tasks.

\noindent\textbf{Metrics.}
To ensure direct and fair comparability with DPFunc~\cite{DPFunc_NatCommun2024}, which serves as the benchmark work targeted in this study, our evaluation aligns with that work in terms of core metrics and computational logic, though notational conventions are independently defined herein.
Specifically, we use two standard metrics, Fmax and AUPR, to assess model performance: Fmax refers to the maximum harmonic mean of precision and recall across all confidence thresholds, and AUPR is computed by integrating the precision–recall curve over the full range of confidence thresholds.

Consistent with this alignment, the step-by-step computation of Fmax and AUPR aligns logically with that in DPFunc.
Additionally, all predicted probabilities \(\hat{p}_{p,\tau}\) undergo hierarchical true-path propagation as post-processing, a procedure that is also logically equivalent to the one detailed in Eq.~\ref{eq:propagate}.
Together, these consistencies guarantee the fairness of performance comparisons between our model and DPFunc.

\noindent\textbf{Implementation Details.}
For protein sequences we employed the ESM-1b~\cite{ESM_Rives2021} model in inference mode to generate residue-level representations; a mean-pool over all tokens yielded a single 1\,280-D vector per protein.
Structural embeddings were encoded with the officially released GearNet-Edge~\cite{zhang2022gearnet} checkpoint; the default sum readout was applied to produce a 3\,072-D structure embedding.
To align structural embeddings with the sequence embedding space, we project the former into the latter using the Sink-horn algorithm for approximation, with a regularization parameter \(\varepsilon = 1e-3\). Iterations terminate when the change in transport cost falls below \(1e-6\).
Both sequence and structure vectors spaces were frozen throughout both pre-training and downstream fine-tuning.

To train hyperbolic embeddings for Gene Ontology (GO) terms (capturing their hierarchical relationships), we use a standard hyperbolic embedding framework with consistent configurations across all GO branches. The Poincaré manifold~\cite{poincare} is adopted to model GO’s tree-like hierarchy, with a distance-based energy function optimizing the distinction between true hierarchical edges (using \texttt{is\_a} and \texttt{part\_of} relations) and negative samples.
The model is evaluated via reconstruction accuracy to assess its ability to recover original GO edges.
These vectors were also frozen during the pre-training of Conditional Graph Generation (CGG)-based information fusion.

CGG-based information fusion utilized four NVIDIA H100 GPUs with AdamW optimization~\cite{adamw}. Learning rates were adapted to each ontology branch’s scale: 2e-4 for BP and MF, and 5e-5 for CC (smallest sample size); weight decay was uniformly set to 1e-12 across all branches. Pre-training took approximately 2–3 hours per branch.

For downstream fine-tuning (GO term classification), a single NVIDIA A100 GPU was used with AdamW and a OneCycleLR scheduler.
The scheduler first linearly increased the learning rate to branch-specific peak values (8e-4 for BP, 2e-4 for MF, and 7e-4 for CC) and then gradually decreased it, allowing rapid exploration of the parameter space in early stages while refining with smaller updates later to preserve the distributions of the condition encoder.
This approach balanced convergence speed and stability when updating both the classifier (large parameter size) and the condition encoder under limited training epochs and adjusted batch sizes per branch.
Weight decay was set to 1e-4 (distinct from CGG’s 1e-12) to provide stronger regularization for  parameter space in downstream classification.
Fine-tuning required ~10–20 minutes per branch.

\subsection{Main Results}
\label{subsec:mainexp}
\begin{figure}[t] 
    \centering    \includegraphics[width=0.8\linewidth]{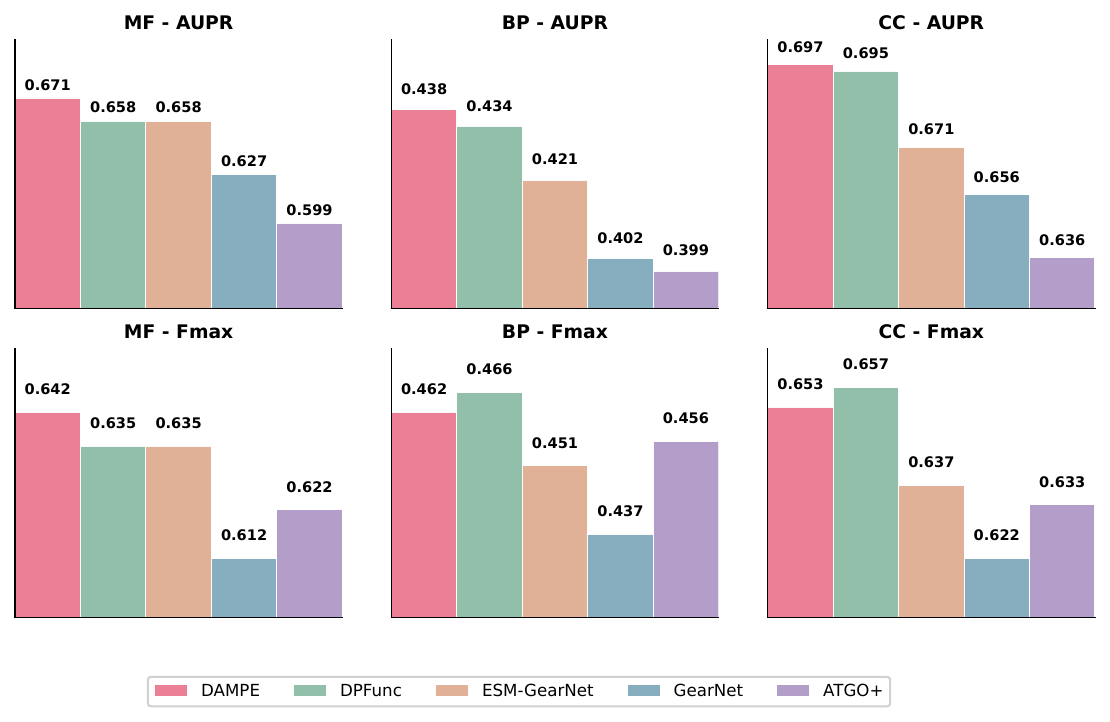}
    \caption{Performance comparison of five models (DAMPE, DPFunc, ESM-GearNet, GearNet, ATGO+) across Gene Ontology (GO) ontology using AUPR and Fmax metrics. DPFunc is the state-of-the-art baseline with domain-guided structure information; ESM-GearNet is a multi-modal baseline fusing intrinsic information; GearNet is a structure-only baseline with a residue-level geometric encoder; ATGO+ is a sequence-only baseline with alignment-enhanced features.}
    \label{fig:main_result}
\end{figure}
To comprehensively assess the performance of protein function prediction methods, we conduct evaluations across three Gene Ontology (GO) ontology (MF, BP and CC) using two benchmark metrics: Fmax and AUPR.
All results are reported as the average of 5 independent experimental runs to ensure statistical stability, with bold values in Table \ref{tab:main-result} indicating the best performance for each metric,
and those marked with (*) denote results significantly superior to the state-of-the-art (SOTA) method DPFunc (\(p<0.05\)).
For intuitive comparison of representative methods, we further visualize the performance of five key models in Figure \ref{fig:main_result}, focusing on the core trends that underscore the value of multi-modal information integration. 

We begin our analysis by examining how input modalities shape predictive performance, starting with sequence-only baselines.
Among these methods, ATGO+ achieves the strongest results—for example, its AUPR scores reach 0.599 (MF) and 0.636 (CC), which confirms that advanced sequence encoding provides a solid foundation for protein function prediction.
However, sequence alone has clear limitations: even ATGO+ is outperformed by methods that incorporate structural information, as seen with GearNet.
By leveraging 3D structural cues alongside sequence, GearNet improves CC-AUPR by 0.02 pp compared to ATGO+, highlighting that structural data is a critical complement to sequence—particularly for tasks like CC, where the correlation between protein structure and localization is inherently strong.
This benefit of multi-modal fusion is further reinforced by ESM-GearNet: by fusing ESM-2 sequence embeddings with GearNet structural features, it delivers additional gains (e.g., MF-AUPR=0.658 vs. GearNet’s 0.627), showing that dedicated integration of sequence and structure can unlock performance beyond what single-modal inputs achieve.
We then turn to methods that incorporate extrinsic information (e.g., PPI, InterPro) to explore how relational or domain knowledge enhances performance.
These methods generally outperform sequence-only and sequence+structure baselines in at least one branch—for instance, DeepGraphGO, which relies on PPI and InterPro, outperforms several sequence-only methods in CC—confirming that extrinsic data addresses gaps left by intrinsic (sequence/structure) inputs alone.
The current SOTA, DPFunc, builds on this by combining sequence, structure, and InterPro to lead in BP-Fmax (0.466) and CC-Fmax (0.657), further demonstrating the value of integrating domain information and revealing the untapped potential of fusing such extrinsic data with intrinsic modalities.

Against this backdrop, our proposed \ours distinguishes itself by holistically integrating intrinsic (sequence/structure) and extrinsic (PPI-GO) information.
This design enables \ours to outperform both baselines and the SOTA in critical metrics: it achieves the highest MF-Fmax (0.642) and MF-AUPR (0.671), as well as the highest BP-AUPR (0.438)—all significantly better than DPFunc (\(p<0.05\)).
Even in CC, where structural cues are most impactful, \ours matches DPFunc’s AUPR (0.697 vs. 0.695) and is only marginally lower in Fmax (0.653 vs. 0.657)—a near-parity that underscores its ability to compete in structure-sensitive tasks without relying on InterPro.
Collectively, these results (supported by the visualized trends in Figure \ref{fig:main_result}) confirm that \ours’ balanced integration of intrinsic and extrinsic information addresses the limitations of prior methods, delivering consistent improvements in challenging branches while maintaining strong competitiveness across all evaluated tasks.

Overall, \ours can both surpass the SOTA DPFunc on several metrics (notably MF and BP AUPR) and match its performance on CC, demonstrating the method's ability to deliver superior or comparable SOTA-level results.

\subsection{Ablation Study}
\label{subsec:ablation}
To validate the contributions of each component in our framework, we conducted a series of ablation experiments.

\subsubsection{Evaluating Alignment Strategies for Multimodal Fusion} 
\begin{table}[htbp]
\centering
\begin{tabular}{ccccccc}
\toprule
\multirow{2}{*}{Alignment} & \multicolumn{2}{c}{MF}          & \multicolumn{2}{c}{BP}          & \multicolumn{2}{c}{CC}          \\
                     & Fmax  & AUPR  & Fmax  & AUPR  & Fmax  & AUPR  \\ \hline
None                 & 0.618 & 0.637 & 0.445 & 0.415 & 0.629 & 0.663 \\
Contrastive Learning & 0.612 & 0.603 & 0.417 & 0.377 & 0.617 & 0.647 \\
Cross-Attention      & 0.585 & 0.587 & 0.347 & 0.289 & 0.600 & 0.632 \\
ESM-GearNet          & 0.635 & 0.658 & 0.451 & 0.421 & 0.637 & 0.671 \\
OT-based     & \textbf{0.640} & \textbf{0.667} & \textbf{0.458} & \textbf{0.434} & \textbf{0.650} & \textbf{0.690} \\ 
\bottomrule
\end{tabular}
\caption{Comparison of intrinsic data alignment strategies for GO prediction (MF, BP, CC). ``None'' = direct concatenation of frozen sequence and structure embeddings (no alignment). ``OT-based Representation Alignment'' = Optimal Transport (OT)-based representation alignment, with MoE fusion trained from scratch. Bold entries indicate the best score per metric. Results are means over five runs.}
\label{tab:Aligment}
\end{table}
To systematically assess the necessity and efficacy of explicit alignment mechanisms, we conducted an ablation study to compare four fusion paradigms:
\textit{(1) Na\"ive Concatenation}: Directly concatenates frozen sequence and structural embeddings without any alignment, serving as a minimal fusion baseline.
\textit{(2) Contrastive Alignment}~\cite{CLIP}: Performs unsupervised projection into a shared latent space via dual MLP heads trained with contrastive loss—where positive pairs (sequence and structure embeddings of the same protein) are attracted and negative pairs are repelled. The pre-trained projection outputs are concatenated to support downstream prediction.
\textit{(3) ESM-GearNet Serial Fusion}~\cite{zhang2023systematic}: A serial fusion approach that uses the output of the pre-trained Protein Language Model (ESM-2) as the input to the GearNet structure encoder; we adopt the output of their open-source model as protein representations.
\textit{(4) OT-based representation alignment}: Employs explicit domain adaptation to align structural embeddings with the sequence embedding space via OT, leveraging the superior information density of representations. The transformed structural features are concatenated with the original sequence features.

Quantitative analysis (Table~\ref{tab:Aligment}) demonstrates that explicit alignment is critical for multimodal fusion, with serial fusion (ESM-GearNet) and Optimal Transport (OT)-based representation alignment achieving 0.006-0.022 pp increase in Fmax over naive concatenation—confirming significant modality gap reduction. OT emerges as the superior strategy, attaining peak performance (CC Fmax=0.650 and AUPR=0.690) while avoiding serial fusion's costly co-training overhead. Notably, contrastive alignment underperformed the baseline, suggesting task-agnostic alignment may disrupt functional semantics. These results validate OT-based representation alignment as an efficient and effective paradigm for cross-modal integration.

\subsubsection{Extrinsic Knowledge Integration Analysis}

\begin{table*}[t]
\centering
\begin{tabular}{ccccccc}
\toprule
\multirow{2}{*}{Methods}                              & \multicolumn{2}{c}{MF} & \multicolumn{2}{c}{BP} & \multicolumn{2}{c}{CC} \\
                                                      & Fmax       & AUPR      & Fmax       & AUPR      & Fmax       & AUPR      \\ \hline
                                                      
GAT                                                  & 0.546      & 0.523     & 0.391      & 0.337     & 0.642      & 0.691     \\
SAGE                                                  & 0.508      & 0.479     & 0.313      & 0.244     & 0.589      & 0.610     \\

\ours w/o CGG & 0.640      & 0.667     & 0.458      & 0.434     & 0.650      & 0.690     \\
\ours & \textbf{0.642} & \textbf{0.671} & \textbf{0.462} & \textbf{0.438} & \textbf{0.653} & \textbf{0.697} \\ \bottomrule
\end{tabular}
\caption{Comparison of PPI integration methods on GO prediction. 
GAT and SAGE are standard GNN architectures applied directly to the PPI network to extract node features for downstream classification. 
Boldface denotes the best score per metric. Results are means over five runs.}
\label{tab:PPI}
\end{table*}
We systematically evaluated three extrinsic knowledge integration strategies: (1) Mixture-of-Experts (MoE) fusion with random initialization, (2) MoE initialized via conditional graph generation (CGG) information fusion, and (3) conventional GNN-based PPI integration (GAT~\cite{DeepGS}/SAGE~\cite{GraphSAGE}).

Quantitative results (Table\ref{tab:PPI}) demonstrate that the pre-trained MoE significantly outperforms both randomly initialized MoE (Fmax 0.002-0.004 pp\textuparrow across ontology) and traditional GNN methods, with our full framework achieving the best performance. 
The consistent superiority of CGG—particularly its 0.011-0.149 pp increase in Fmax over GNN baselines—validates our generative approach as the optimal paradigm for extrinsic knowledge fusion.

\begin{table}[]
\centering
\resizebox{\linewidth}{!}{%
\begin{tabular}{lcccccc}
\toprule
\multirow{2}{*}{\textbf{Model}} & \multicolumn{2}{c}{\textbf{Average Inference Time (ms)}} & \multicolumn{2}{c}{\textbf{P90 Latency (ms)}} & \multicolumn{2}{c}{\textbf{Throughput (samples/sec)}} \\
\cmidrule(lr){2-3} \cmidrule(lr){4-5} \cmidrule(lr){6-7}
& Mean & Std. & Mean & Std. & Mean & Std. \\
\midrule
GAT & 5.90 & 1.49 & 6.75 & 3.15 & 10844.89 & 1164.50 \\
\ours & 1.28 & 0.97 & 0.99 & 0.19 & 55160.48 & 8742.65 \\
\midrule
\multicolumn{7}{l}{\small{\textit{Note: Results are averaged over 100 repeated experiments on the BP test set, with outliers filtered using the 3$\sigma$ rule.}}} \\
\multicolumn{7}{l}{\small{\textit{Valid samples after filtering: GAT (98/99/96 for time/P90/throughput); \ours (97/98/95 for time/P90/throughput).}}} \\
\bottomrule
\end{tabular}%
}
\vspace{1mm}
\caption{Comparison of inference cost on GO prediction.}
\label{tab:inference_performance}
\end{table}
To validate \ours’ practicality for protein function prediction, we compare its inference performance with the representative GNN model GAT, which better reflects the efficiency bottlenecks of traditional GNNs due to its attention mechanism, via 100 repeated experiments (3\(\sigma\) outlier filtering) under identical environments (Table \ref{tab:inference_performance}).
Results confirm \ours delivers significant efficiency advantages over GAT: it achieves much lower average inference time and P90 latency (with better stability), alongside substantially higher throughput—critical for both real-time high-throughput protein annotation and large-scale processing of uncharacterized proteins.
These gains stem from \ours’ design, which abandons traditional GNN message-passing and thus reduces redundant computations, while retaining discriminative features critical for function classification—ultimately delivering superior inference efficiency.

\subsection{Hyperparameter Sensitivity}
\begin{figure}[htbp]
    \centering
    \begin{subfigure}[b]{0.48\textwidth}  
        \centering
        \includegraphics[width=\textwidth]{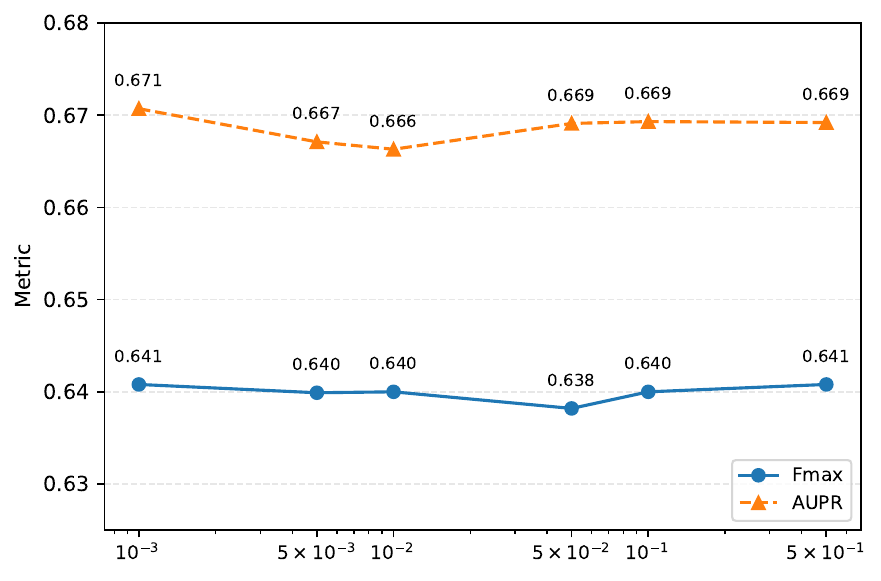}
        \caption{Sensitivity Analysis of $\varepsilon$. The plot evaluates AUPR and Fmax under different epsilon values, demonstrating the model's stable performance across the tested parameter range.}
        \label{fig:epsilon_sensitivity}
    \end{subfigure}
    \hfill  
    \begin{subfigure}[b]{0.48\textwidth}  
        \centering
        \includegraphics[width=\textwidth]{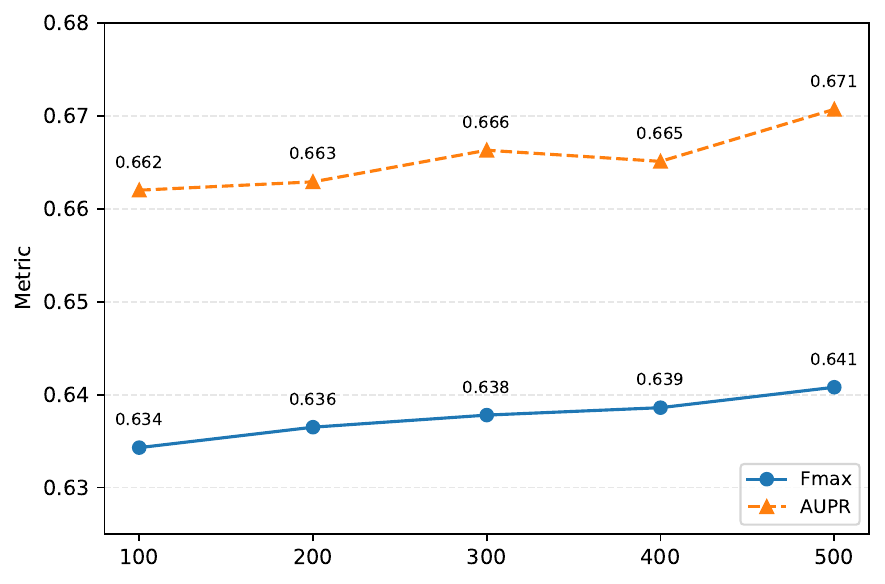}
        \caption{Sensitivity Analysis of Diffusion Steps. The plot evaluates AUPR and Fmax under different diffusion steps, demonstrating the model's stable performance across the tested parameter range.}
        \label{fig:diffusion_sensitivity}
    \end{subfigure}
    \caption{Hyperparameter Sensitivity}
    \label{fig:main}
\end{figure}
To verify the robustness of \ours, we perform a systematic sensitivity analysis on the two core modules that bridge modalities: (i) the entropy-regularisation coefficient $\varepsilon$ of the optimal transport (OT) and (ii) the diffusion-step count $T$ in the Conditional Graph Generation (CGG)-based information fusion.

\subsubsection{Robustness to the weight of entropy-regularisation coefficient $\varepsilon$} 
\label{subsec:sensitivity}
We investigate the effect of the entropy regularization coefficient $\epsilon$ in the Sinkhorn-based optimal transport (OT) problem. 
Figure~\ref{fig:epsilon_sensitivity} summarizes the performance in terms of AUPR and Fmax across a wide range of $\epsilon$ values. 
Two consistent trends can be observed.
Although OT improves downstream performance relative to no alignment (see section~\ref{subsec:ablation}), we observe that model performance is notably robust to the choice of the regularization coefficient \(\varepsilon\) within the tested range. 
We attribute this robustness to (i) the high dimensionality of pre-trained embeddings (1280, 3072), which concentrates pairwise costs and reduces the effective sensitivity of the Sinkhorn kernel to moderate changes in \(\varepsilon\); (ii) the barycentric projection and the downstream classifier, which smooth and compensate for small differences in the transport plan; and (iii) practical cost-matrix scaling and numerical stabilizations in the Sinkhorn implementation~\cite{Cuturi2013_Sinkhorn}. 
Consequently, although very small $\varepsilon$ values asymptotically approach the unregularized OT solution and can yield slight improvements, in our experiments tuning $\varepsilon$ produces only marginal gains: performance remains stable across the tested range and larger $\varepsilon$ values give negligible variation. Hence, selecting a moderate $\varepsilon$ (validated on a held-out split) is sufficient in practice and simplifies deployment.

\subsubsection{Sensitivity to diffusion steps $T$}  
As shown in Figure~\ref{fig:diffusion_sensitivity}, Fmax increases monotonically with the number of diffusion steps over the tested range, while AUPR exhibits mild, non-monotonic fluctuations and attains its maximum at the largest tested value (500 steps).
The observed changes across the sweep are small in magnitude.
The magnitude of the gains is modest, with improvements being incremental rather than dramatic, and this suggests the discrete-diffusion pretraining is robust to the exact choice of steps within the tested range.

\subsection{Qualitative Evaluation and Case Study}
\label{visulize}

\begin{figure}[thbp] 
  \centering
  \includegraphics[width=\textwidth]{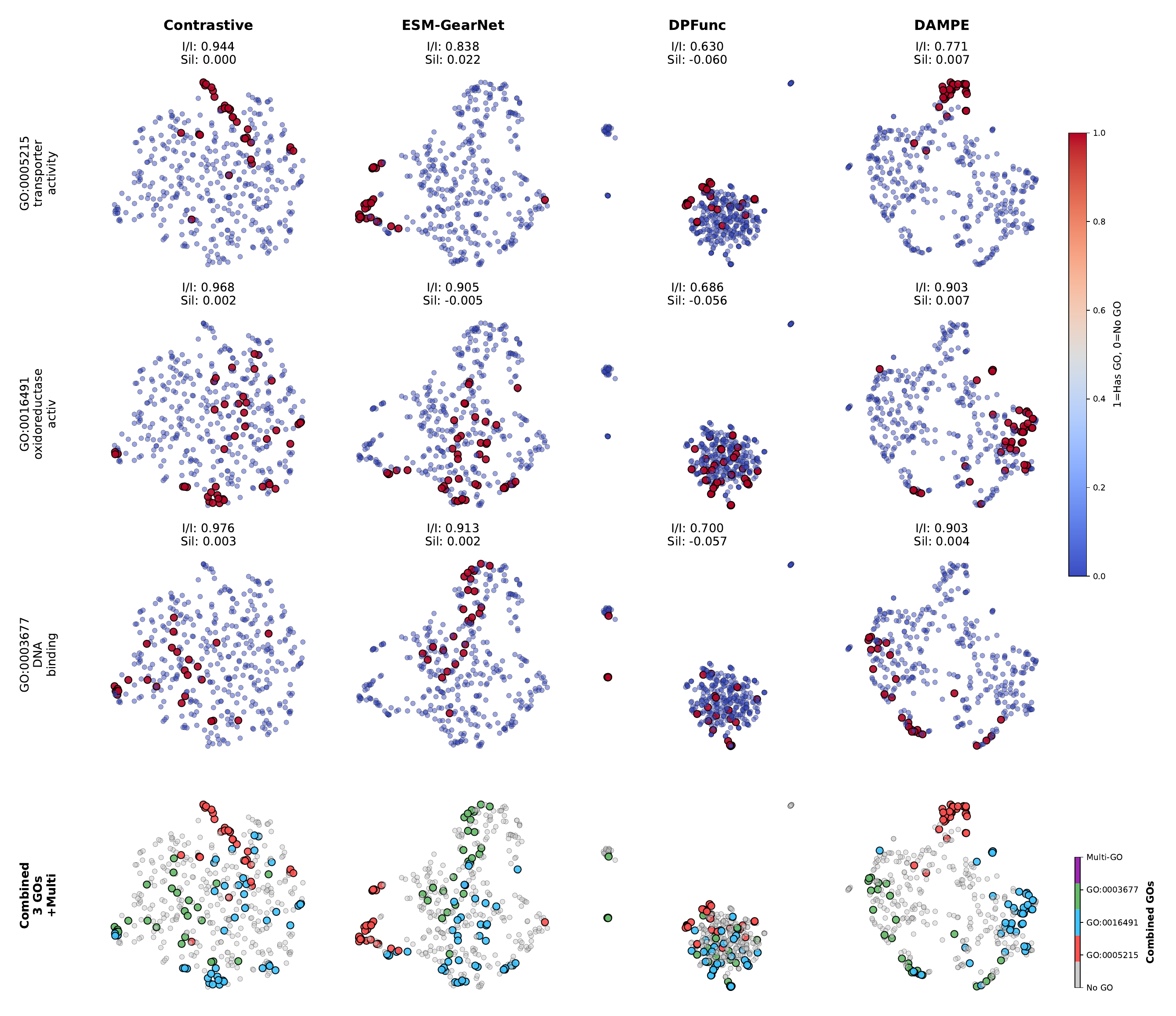}
  \caption{Qualitative evaluation of embeddings from four models (Contrastive, ESM-GearNet, DPFunc, \ours) via UMAP 2D projections and clustering metrics. The figure includes 4 rows: 3 rows for individual Molecular Function GO terms and 1 row for the combined three GO terms. Each subplot shows UMAP-projected embeddings (distinguishing positive/negative samples for individual GO terms, or GO membership for combined terms) and reports two metrics for individual GO terms: intra/inter-cluster distance ratio (measuring cluster compactness) and silhouette score (measuring positive-negative separation). Color bars indicate sample labels for individual and combined GO term projections. This visualization is used to assess the quality of model embeddings in capturing functional patterns of GO terms.}
  \label{fig:case-study}
\end{figure}

We perform a qualitative evaluation of protein embeddings by using dimensionality reduction and clustering to explore how well each method captures protein functional patterns.
For this analysis, we selected three representative Molecular Function (MF) GO terms: transporter activity (GO:0005215), oxidoreductase activity (GO:0016491), and DNA binding (GO:0003677). These terms were chosen for two key reasons: they are relatively close to MF’s ancestral nodes (covering broad, functionally distinct categories) while belonging to separate sub-branches of the MF hierarchy, and they have balanced sample sizes to ensure reliable clustering diagnostics. As shown in Figure~\ref{fig:case-study}, the evaluation combines UMAP 2D projections (to visualize embedding distribution) with two clustering metrics: intra/inter-cluster distance ratio (for assessing cluster compactness) and silhouette score (for measuring positive-negative separation).  

Our proposed \ours shows favorable embedding quality across these three terms: it tends to achieve relatively high intra/inter ratios (transporter: 0.771; oxidoreductase: 0.903; DNA binding: 0.903) and small but positive silhouette scores (transporter: 0.007; oxidoreductase: 0.007; DNA binding: 0.004), suggesting its positive samples may form more compact clusters with clearer separation from negative samples. 
By comparison, the state-of-the-art DPFunc generally yields lower intra/inter ratios (transporter: 0.630; oxidoreductase: 0.686; DNA binding: 0.700) and negative silhouette scores (transporter: -0.060; oxidoreductase: -0.056; DNA binding: -0.057), which could indicate more dispersed positive clusters.
The Contrastive (which generates embeddings via contrastive learning) and ESM-GearNet baselines exhibit marginal differences from each other: while they sometimes match or slightly exceed DPFunc in intra/inter ratios, they rarely achieve consistently positive silhouette scores, implying their separation advantage is limited.  

These visual and numeric observations collectively suggest that \ours' embeddings may be more compact and reliably separable for the selected MF terms, even across broad, distinct sub-branches of the MF hierarchy, than those of the compared methods.
This qualitative analysis supports the idea that \ours may better capture branch-specific intrinsic and extrinsic information, while maintaining generalization across diverse functional contexts within MF.

\section{Discussion}
\label{sec:discussion}
\ours addresses core challenges in multi-modal protein function prediction by unifying intrinsic (sequence/structure) and extrinsic (PPI/GO) information through two tailored mechanisms.
Its Optimal Transport (OT)-based representation alignment alleviate cross-modal heterogeneity. Complementarily, conditional graph generation (CGG)-based information fusion replaces traditional GNNs for extrinsic fusion, mitigating noise of PPI-GO graph and the unrealistic edge independence assumption of GNNs.
Together, these designs enable \ours to advance protein function prediction, outperforming baselines in key tasks while retaining pre-trained models' knowledge and reducing unnecessary computational overhead.  

\ours has inherent limitations tied to two core design choices.
First, its reliance on protein-level embeddings for both protein sequence (e.g., mean-pooled PLM outputs) and structure (e.g., mean-readout of geometric encoders) results in the loss of residue-level details, making it unsuitable for residue-specific tasks.
Second, regarding its CGG stage: compared to traditional GNNs that directly perform message passing on PPI-GO graphs, CGG's generative paradigm for learning extrinsic relational knowledge requires greater computational resources and longer training time.

\section*{Data availability}
All relevant source codes of \ours are available at \url{https://anonymous.4open.science/r/DAMPE-ACD8}. The benchmark datasets used in this study refer to the resource hosted at \url{https://github.com/CSUBioGroup/DPFunc}.

\bibliographystyle{elsarticle-num} 
\bibliography{refs}

\begin{thebibliography}{10}
\expandafter\ifx\csname url\endcsname\relax
  \def\url#1{\texttt{#1}}\fi
\expandafter\ifx\csname urlprefix\endcsname\relax\def\urlprefix{URL }\fi
\expandafter\ifx\csname href\endcsname\relax
  \def\href#1#2{#2} \def\path#1{#1}\fi

\bibitem{Hirokawa2009}
N.~Hirokawa, Y.~Noda, Y.~Tanaka, S.~Niwa, Kinesin superfamily motor proteins and intracellular transport, Nat. Rev. Mol. Cell Biol. 10~(10) (2009) 682--696.
\newblock \href {https://doi.org/10.1038/nrm2774} {\path{doi:10.1038/nrm2774}}.

\bibitem{Ferrell2000}
J.~E. Ferrell, What do scaffold proteins really do?, Sci. STKE 2000~(52) (2000) pe1--pe1.
\newblock \href {https://doi.org/10.1126/stke.522000pe1} {\path{doi:10.1126/stke.522000pe1}}.

\bibitem{kim_intermediate_2007}
S.~Kim, P.~A. Coulombe, Intermediate filament scaffolds fulfill mechanical, organizational, and signaling functions in the cytoplasm, Genes Dev. 21~(13) (2007) 1581--1597.
\newblock \href {https://doi.org/10.1101/gad.1552107} {\path{doi:10.1101/gad.1552107}}.

\bibitem{Jumper_Alphafold2021}
J.~Jumper, R.~Evans, A.~Pritzel, et~al., Highly accurate protein structure prediction with {AlphaFold}, Nature 596~(7873) (2021) 583--589.
\newblock \href {https://doi.org/10.1038/s41586-021-03819-2} {\path{doi:10.1038/s41586-021-03819-2}}.

\bibitem{m_how_2023}
M.~Karelina, J.~J. Noh, R.~O. Dror, How accurately can one predict drug binding modes using alphafold models?, eLife (2023).
\newblock \href {https://doi.org/10.7554/elife.89386.1} {\path{doi:10.7554/elife.89386.1}}.

\bibitem{Brandes2021}
N.~Brandes, D.~Ofer, O.~Peleg, N.~Rappoport, Proteinbert: A universal deep-learning model of protein sequence and function, Bioinformatics 38~(8) (2021) 2102--2109.
\newblock \href {https://doi.org/10.1093/bioinformatics/btac020} {\path{doi:10.1093/bioinformatics/btac020}}.

\bibitem{ESM_Rives2021}
A.~Rives, J.~Meier, T.~Sercu, S.~Goyal, Z.~Lin, J.~Liu, D.~Guo, M.~Ott, C.~L. Zitnick, J.~Ma, R.~Fergus, Biological structure and function emerge from scaling unsupervised learning to 250 million protein sequences, Proc. Natl. Acad. Sci. U.S.A. 118~(15) (2021) e2016239118.
\newblock \href {https://doi.org/10.1073/pnas.2016239118} {\path{doi:10.1073/pnas.2016239118}}.

\bibitem{elnaggar2021prottrans}
A.~Elnaggar, M.~Heinzinger, C.~Dallago, G.~Rehawi, Y.~Wang, L.~Jones, T.~Gibbs, T.~Feher, C.~Angerer, M.~Steinegger, D.~Bhowmik, B.~Rost, Prottrans: Toward understanding the language of life through self-supervised learning, IEEE Trans. Pattern Anal. Mach. Intell. 44~(10) (2022) 7112--7127.
\newblock \href {https://doi.org/10.1109/TPAMI.2021.3095381} {\path{doi:10.1109/TPAMI.2021.3095381}}.

\bibitem{blastknn_cafa}
P.~Radivojac, W.~T. Clark, T.~R. Oron, et~al., A large-scale evaluation of computational protein function prediction, Nat. Methods 10~(3) (2013) 221--227.
\newblock \href {https://doi.org/10.1038/nmeth.2340} {\path{doi:10.1038/nmeth.2340}}.

\bibitem{diamond}
B.~Buchfink, C.~Xie, D.~H. Huson, Fast and sensitive protein alignment using {DIAMOND}, Nat. Methods 12~(1) (2015) 59--60.
\newblock \href {https://doi.org/10.1038/nmeth.3176} {\path{doi:10.1038/nmeth.3176}}.

\bibitem{gvp_Jing2021}
B.~Jing, S.~Eismann, P.~Suriana, R.~J.~L. Townshend, R.~Dror, \href{https://openreview.net/forum?id=1YLJDvSx6J4}{Learning from protein structure with geometric vector perceptrons}, in: The Ninth International Conference on Learning Representations (ICLR), 2021.
\newline\urlprefix\url{https://openreview.net/forum?id=1YLJDvSx6J4}

\bibitem{zhang2022gearnet}
Z.~Zhang, M.~Xu, A.~R. Jamasb, V.~Chenthamarakshan, A.~Lozano, P.~Das, J.~Tang, \href{https://openreview.net/forum?id=to3qCB3tOh9}{Protein representation learning by geometric structure pretraining}, in: The Eleventh International Conference on Learning Representations (ICLR), 2023.
\newline\urlprefix\url{https://openreview.net/forum?id=to3qCB3tOh9}

\bibitem{Wang2022LMGVP}
Z.~Wang, S.~A. Combs, R.~Brand, M.~R. Calvo, P.~Xu, G.~Price, N.~Golovach, E.~O. Salawu, C.~J. Wise, S.~P. Ponnapalli, P.~M. Clark, {LM}-{GVP}: {An} extensible sequence and structure informed deep learning framework for protein property prediction, Sci. Rep. 12~(1) (2022) 6832.
\newblock \href {https://doi.org/10.1038/s41598-022-10775-y} {\path{doi:10.1038/s41598-022-10775-y}}.

\bibitem{zhang2023systematic}
Z.~Zhang, C.~Wang, M.~Xu, V.~Chenthamarakshan, A.~Lozano, P.~Das, J.~Tang, \href{https://arxiv.org/abs/2303.06275}{A systematic study of joint representation learning on protein sequences and structures} (2023).
\newblock \href {http://arxiv.org/abs/2303.06275} {\path{arXiv:2303.06275}}.
\newline\urlprefix\url{https://arxiv.org/abs/2303.06275}

\bibitem{Modalbias}
Y.~Zhang, P.~E. Latham, A.~Saxe, Understanding unimodal bias in multimodal deep linear networks, in: Proceedings of the 41st International Conference on Machine Learning, ICML'24, JMLR.org, 2024.
\newblock \href {https://doi.org/10.5555/3692070.3694511} {\path{doi:10.5555/3692070.3694511}}.

\bibitem{SPLM}
D.~Wang, M.~Pourmirzaei, U.~L. Abbas, S.~Zeng, N.~Manshour, F.~Esmaili, B.~Poudel, Y.~Jiang, Q.~Shao, J.~Chen, D.~Xu, S-plm: Structure-aware protein language model via contrastive learning between sequence and structure, Adv. Sci. 12~(5) (2025) 2404212.
\newblock \href {https://doi.org/10.1002/advs.202404212} {\path{doi:10.1002/advs.202404212}}.

\bibitem{CLIP}
A.~Radford, J.~W. Kim, C.~Hallacy, A.~Ramesh, G.~Goh, S.~Agarwal, G.~Sastry, A.~Askell, P.~Mishkin, J.~Clark, G.~Krueger, I.~Sutskever, \href{https://proceedings.mlr.press/v139/radford21a.html}{Learning transferable visual models from natural language supervision}, in: Proceedings of the 38th {International} {Conference} on {Machine} {Learning}, PMLR, 2021, pp. 8748--8763.
\newline\urlprefix\url{https://proceedings.mlr.press/v139/radford21a.html}

\bibitem{10.7554/eLife.75751}
D.~del Alamo, D.~Sala, H.~S. Mchaourab, J.~Meiler, Sampling alternative conformational states of transporters and receptors with alphafold2, eLife 11 (2022) e75751.
\newblock \href {https://doi.org/10.7554/eLife.75751} {\path{doi:10.7554/eLife.75751}}.

\bibitem{10.1093/nargab/lqac043}
M.~Heinzinger, M.~Littmann, I.~Sillitoe, N.~Bordin, C.~Orengo, B.~Rost, Contrastive learning on protein embeddings enlightens midnight zone, NAR Genom. Bioinform. 4~(2) (2022) lqac043.
\newblock \href {https://doi.org/10.1093/nargab/lqac043} {\path{doi:10.1093/nargab/lqac043}}.

\bibitem{Szklarczyk2023_STRING}
D.~Szklarczyk, R.~Kirsch, M.~Koutrouli, et~al., The string database in 2023: protein–protein association networks and functional enrichment analyses for any sequenced genome of interest, Nucleic Acids Res. 51~(D1) (2022) D638--D646.
\newblock \href {https://doi.org/10.1093/nar/gkac1000} {\path{doi:10.1093/nar/gkac1000}}.

\bibitem{GeneOntologyConsortium2023}
T.~G.~O. Consortium, The gene ontology resource: enriching a gold mine, Nucleic Acids Res. 49~(D1) (2020) D325--D334.
\newblock \href {https://doi.org/10.1093/nar/gkaa1113} {\path{doi:10.1093/nar/gkaa1113}}.

\bibitem{hou2021tale}
J.~Hou, T.~Wu, R.~Cao, J.~Cheng, Tale: Transformer-based protein function annotation through joint sequence–label embedding, Bioinformatics 37~(18) (2021) 2825--2833.
\newblock \href {https://doi.org/10.1093/bioinformatics/btab198} {\path{doi:10.1093/bioinformatics/btab198}}.

\bibitem{DeepGO_2017}
M.~Kulmanov, M.~A. Khan, R.~Hoehndorf, Deepgo: predicting protein functions from sequence and interactions using a deep ontology-aware classifier, Bioinformatics 34~(4) (2017) 660--668.
\newblock \href {https://doi.org/10.1093/bioinformatics/btx624} {\path{doi:10.1093/bioinformatics/btx624}}.

\bibitem{DEEPGRAPHGO}
R.~You, S.~Yao, H.~Mamitsuka, S.~Zhu, Deepgraphgo: graph neural network for large-scale, multispecies protein function prediction, Bioinformatics 37~(Supplement\_1) (2021) i262--i271.
\newblock \href {https://doi.org/10.1093/bioinformatics/btab270} {\path{doi:10.1093/bioinformatics/btab270}}.

\bibitem{DPFunc_NatCommun2024}
W.~Wang, Y.~Shuai, M.~Zeng, W.~Fan, M.~Li, {DPFunc}: accurately predicting protein function via deep learning with domain-guided structure information, Nat. Commun. 16~(1) (2024) 70.
\newblock \href {https://doi.org/10.1038/s41467-024-54816-8} {\path{doi:10.1038/s41467-024-54816-8}}.

\bibitem{PPINosiy}
F.~B. Correia, E.~D. Coelho, J.~L. Oliveira, J.~P. Arrais, Handling noise in protein interaction networks, Biomed Res. Int. 2019~(1) (2019) 8984248.
\newblock \href {https://doi.org/10.1155/2019/8984248} {\path{doi:10.1155/2019/8984248}}.

\bibitem{pmlr-v202-dong23a}
M.~Dong, Y.~Kluger, \href{https://proceedings.mlr.press/v202/dong23a.html}{Towards understanding and reducing graph structural noise for {GNN}s}, in: A.~Krause, E.~Brunskill, K.~Cho, B.~Engelhardt, S.~Sabato, J.~Scarlett (Eds.), Proceedings of the 40th International Conference on Machine Learning, Vol. 202 of Proceedings of Machine Learning Research, PMLR, 2023, pp. 8202--8226.
\newline\urlprefix\url{https://proceedings.mlr.press/v202/dong23a.html}

\bibitem{Graffe}
D.~Chen, S.~Xue, L.~Chen, Y.~Wang, Q.~Liu, S.~Wu, Z.-M. Ma, L.~Wang, \href{https://arxiv.org/abs/2505.04956}{Graffe: Graph representation learning via diffusion probabilistic models} (2025).
\newblock \href {http://arxiv.org/abs/2505.04956} {\path{arXiv:2505.04956}}.
\newline\urlprefix\url{https://arxiv.org/abs/2505.04956}

\bibitem{CNN_Shanehsazzadeh}
A.~Shanehsazzadeh, D.~Belanger, D.~Dohan, \href{https://arxiv.org/abs/2011.03443}{Is transfer learning necessary for protein landscape prediction?} (2020).
\newblock \href {http://arxiv.org/abs/2011.03443} {\path{arXiv:2011.03443}}.
\newline\urlprefix\url{https://arxiv.org/abs/2011.03443}

\bibitem{LSTM_wang2016protein}
J.~Cheng, Y.~Liu, Y.~Ma, Protein secondary structure prediction based on integration of {CNN} and {LSTM} model, J. Visual Commun. Image Represent. 71 (2020) 102844.
\newblock \href {https://doi.org/10.1016/j.jvcir.2020.102844} {\path{doi:10.1016/j.jvcir.2020.102844}}.

\bibitem{deepfri_gligorijevic2021structure}
V.~Gligorijević, P.~D. Renfrew, T.~Kosciolek, J.~K. Leman, D.~Berenberg, T.~Vatanen, C.~Chandler, B.~C. Taylor, I.~M. Fisk, H.~Vlamakis, R.~J. Xavier, R.~Knight, K.~Cho, R.~Bonneau, Structure-based protein function prediction using graph convolutional networks, Nat. Commun. 12~(1) (2021) 3168.
\newblock \href {https://doi.org/10.1038/s41467-021-23303-9} {\path{doi:10.1038/s41467-021-23303-9}}.

\bibitem{hermosilla2021ieconv}
P.~Hermosilla, M.~Sch{\"a}fer, M.~Lang, G.~Fackelmann, P.-P. V{\'a}zquez, B.~Kozlikova, M.~Krone, T.~Ritschel, T.~Ropinski, \href{https://openreview.net/forum?id=l0mSUROpwY}{Intrinsic-extrinsic convolution and pooling for learning on 3d protein structures}, in: The Ninth International Conference on Learning Representations (ICLR), 2021.
\newline\urlprefix\url{https://openreview.net/forum?id=l0mSUROpwY}

\bibitem{dmasif_sverrisson2021fast}
F.~Sverrisson, J.~Feydy, B.~Correia, et~al., Fast end-to-end learning on protein surfaces, in: Proceedings of the IEEE/CVF Conference on Computer Vision and Pattern Recognition (CVPR), 2021, pp. 15272--15281.

\bibitem{hermosilla2022contrastive}
P.~Hermosilla, T.~Ropinski, \href{https://arxiv.org/abs/2205.15675}{Contrastive representation learning for 3d protein structures} (2022).
\newblock \href {http://arxiv.org/abs/2205.15675} {\path{arXiv:2205.15675}}.
\newline\urlprefix\url{https://arxiv.org/abs/2205.15675}

\bibitem{xu2023pretraining}
Z.~Zhang, M.~Xu, A.~Lozano, V.~Chenthamarakshan, P.~Das, J.~Tang, \href{https://openreview.net/forum?id=tIzbNQko3c}{Pre-training protein encoder via siamese sequence-structure diffusion trajectory prediction}, in: Advances in Neural Information Processing Systems, 2023.
\newline\urlprefix\url{https://openreview.net/forum?id=tIzbNQko3c}

\bibitem{cdconv_Fan2023}
H.~Fan, Z.~Wang, Y.~Yang, M.~Kankanhalli, \href{https://openreview.net/forum?id=P5Z-Zl9XJ7}{Continuous-discrete convolution for geometry-sequence modeling in proteins}, in: The Eleventh International Conference on Learning Representations (ICLR), 2023.
\newline\urlprefix\url{https://openreview.net/forum?id=P5Z-Zl9XJ7}

\bibitem{zhang2023sst}
G.~Zhou, Y.~Zhao, S.~He, X.~Bo, {SST}-{ResNet}: {A} sequence and structure information integration model for protein property prediction, Int. J. Mol. Sci. 26~(6) (2025) 2783.
\newblock \href {https://doi.org/10.3390/ijms26062783} {\path{doi:10.3390/ijms26062783}}.

\bibitem{zhang2022atgo}
Y.-H. Zhu, C.~Zhang, D.-J. Yu, Y.~Zhang, Integrating unsupervised language model with triplet neural networks for protein gene ontology prediction, PLoS Comput Biol 18~(12) (2022) e1010793.
\newblock \href {https://doi.org/10.1371/journal.pcbi.1010793} {\path{doi:10.1371/journal.pcbi.1010793}}.

\bibitem{poincare}
M.~Nickel, D.~Kiela, \href{https://proceedings.neurips.cc/paper_files/paper/2017/file/59dfa2df42d9e3d41f5b02bfc32229dd-Paper.pdf}{Poincaré embeddings for learning hierarchical representations}, in: I.~Guyon, U.~V. Luxburg, S.~Bengio, H.~Wallach, R.~Fergus, S.~Vishwanathan, R.~Garnett (Eds.), Advances in {Neural} {Information} {Processing} {Systems}, Vol.~30, Curran Associates, Inc., 2017.
\newline\urlprefix\url{https://proceedings.neurips.cc/paper_files/paper/2017/file/59dfa2df42d9e3d41f5b02bfc32229dd-Paper.pdf}

\bibitem{Cuturi2013_Sinkhorn}
M.~Cuturi, Sinkhorn distances: Lightspeed computation of optimal transport, in: Advances in Neural Information Processing Systems, 2013, pp. 2292--2300.

\bibitem{DiGress2022}
C.~Vignac, I.~Krawczuk, A.~Siraudin, B.~Wang, V.~Cevher, P.~Frossard, \href{https://openreview.net/forum?id=UaAD-Nu86WX}{Digress: Discrete denoising diffusion for graph generation}, in: The Eleventh International Conference on Learning Representations (ICLR), 2022.
\newline\urlprefix\url{https://openreview.net/forum?id=UaAD-Nu86WX}

\bibitem{diffusion_cosine}
P.~Dhariwal, A.~Q. Nichol, \href{https://openreview.net/forum?id=AAWuCvzaVt}{Diffusion models beat {GAN}s on image synthesis}, in: A.~Beygelzimer, Y.~Dauphin, P.~Liang, J.~W. Vaughan (Eds.), Advances in Neural Information Processing Systems, Vol.~35, 2021.
\newline\urlprefix\url{https://openreview.net/forum?id=AAWuCvzaVt}

\bibitem{ho2021_CFG}
J.~Ho, T.~Salimans, \href{https://openreview.net/pdf?id=qw8AKxfYbI}{Classifier-free diffusion guidance}, in: NeurIPS 2021 Workshop on Deep Generative Models and Downstream Applications (NeurIPS Workshop), 2021.
\newline\urlprefix\url{https://openreview.net/pdf?id=qw8AKxfYbI}

\bibitem{graphTransfomre}
V.~P. Dwivedi, X.~Bresson, \href{https://arxiv.org/abs/2012.09699}{A generalization of transformer networks to graphs}, in: AAAI Workshop on Deep Learning on Graphs: Methods and Applications, 2021.
\newline\urlprefix\url{https://arxiv.org/abs/2012.09699}

\bibitem{GraphSAGE}
W.~Hamilton, Z.~Ying, J.~Leskovec, \href{https://proceedings.neurips.cc/paper_files/paper/2017/file/5dd9db5e033da9c6fb5ba83c7a7ebea9-Paper.pdf}{Inductive representation learning on large graphs}, in: I.~Guyon, U.~V. Luxburg, S.~Bengio, H.~Wallach, R.~Fergus, S.~Vishwanathan, R.~Garnett (Eds.), Advances in {Neural} {Information} {Processing} {Systems}, Vol.~30, Curran Associates, Inc., 2017.
\newline\urlprefix\url{https://proceedings.neurips.cc/paper_files/paper/2017/file/5dd9db5e033da9c6fb5ba83c7a7ebea9-Paper.pdf}

\bibitem{kulmanov_deepgozero_2022}
M.~Kulmanov, R.~Hoehndorf, {DeepGOZero}: improving protein function prediction from sequence and zero-shot learning based on ontology axioms, Bioinformatics 38 (2022) i238--i245.
\newblock \href {https://doi.org/10.1093/bioinformatics/btac256} {\path{doi:10.1093/bioinformatics/btac256}}.

\bibitem{deepgoplus_Kulmanov2019}
M.~Kulmanov, R.~Hoehndorf, {DeepGOPlus}: improved protein function prediction from sequence, Bioinformatics 36~(2) (2020) 422--429.
\newblock \href {https://doi.org/10.1093/bioinformatics/btz595} {\path{doi:10.1093/bioinformatics/btz595}}.

\bibitem{adamw}
I.~Loshchilov, F.~Hutter, \href{https://openreview.net/forum?id=Bkg6RiCqY7}{Decoupled weight decay regularization}, in: The Seventh International Conference on Learning Representations (ICLR), 2019.
\newline\urlprefix\url{https://openreview.net/forum?id=Bkg6RiCqY7}

\bibitem{DeepGS}
Q.~Yuan, J.~Chen, H.~Zhao, Y.~Zhou, Y.~Yang, Structure-aware protein–protein interaction site prediction using deep graph convolutional network, Bioinformatics 38~(1) (2021) 125--132.
\newblock \href {https://doi.org/10.1093/bioinformatics/btab643} {\path{doi:10.1093/bioinformatics/btab643}}.

\end{thebibliography}

\end{document}